\newcommand{\bleftm}{\bigl\{\!\!\bigl\{}
\newcommand{\brightm}{\bigl\}\!\!\bigl\}}
\newcommand{\relu}{\operatorname{ReLU}}
\newcommand{\ffn}{\operatorname{FFN}}
\newcommand{\nc}{{\sf nc}}
\newcommand{\eb}{{\sf eb}}\usepackage{tcolorbox}
\definecolor{yellow}{HTML}{ffc107}
\definecolor{blue}{HTML}{1e88e5}
\definecolor{purple}{HTML}{d81b60}
\newsavebox\MBox
\newcommand\Cline[2][red]{{\sbox\MBox{$#2$}
  \rlap{\usebox\MBox}\color{#1}\rule[-1.2\dp\MBox]{\wd\MBox}{0.5pt}}}
\definecolor{oiBlue}{RGB}{0,114,178}
\definecolor{oiOrange}{RGB}{230,159,0}
\definecolor{oiSky}{RGB}{86,180,233}
\newcommand{\first}[1]{\colorbox{oiBlue}{\textcolor{white}{#1}}}
\newcommand{\second}[1]{\colorbox{oiSky}{\textcolor{black}{#1}}}
\newcommand{\third}[1]{\colorbox{oiOrange}{\textcolor{black}{#1}}}
\newcommand{\tablecell}[1]{\begin{tabular}[x]{@{}l@{}}#1\end{tabular}  }
\newtheorem{theorem}{Theorem}
\newtheorem{proposition}{Proposition}
\newtheorem{lemma}{Lemma}
\newtheorem{example}{Example}
\title{Message Passing on the Edge:\\ Towards Scalable and Expressive GNNs}
\author{Pablo Barcel\'o \\ IMC UC, CENIA \& IMFD \\ \texttt{pbarcelo@uc.cl} \and Fabian Jogl\\ TU Wien\\ \texttt{fabian.jogl@tuwien.ac.at} \and Alexander Kozachinskiy\\ CENIA\\ \texttt{alexander.kozachinskyi@cenia.cl} \and Matthias Lanzinger\\ TU Wien \\ \texttt{matthias.lanzinger@tuwien.ac.at} \and  Stefan Neumann \\ TU Wien \\ \texttt{stefan.neumann@tuwien.ac.at} \and  Crist\'obal Rojas \\ IMC UC \& CENIA \\ \texttt{luis.rojas@uc.cl}}
\begin{document}
\maketitle

\begin{abstract}
Graph neural networks (GNNs) are widely used in graph learning and most
architectures propagate information by passing messages between vertices. In
this work, we shift our attention to GNNs that perform message passing on
\emph{edges} and introduce EB-1WL, an edge-based color-refinement test, and a
corresponding architecture, EB-GNN. 
Our EB-GNN architecture is inspired by the classic triangle-counting algorithm
of Chiba and Nishizeki and passes messages along edges and triangles.
Our contributions are as follows:
(1)~Theoretically, we show that EB-1WL is significantly more expressive than 1WL.
We provide a complete logical characterization of EB-1WL in first-order logic,
along with distinguishability results via homomorphism counting.
To the best of our knowledge, EB-GNN has the strongest theoretical expressivity
guarantees among edge-based message-passing GNNs in the literature.
(2)~Unlike many GNN architectures that are more expressive than 1WL, we
prove that EB-1WL and EB-GNN admit near-linear time and memory usage on
practical graph learning workloads.
(3)~We show in experiments that EB-GNN is a highly efficient general-purpose
architecture: it substantially outperforms simple MPNNs and remains competitive
with task-specialized state-of-the-art GNNs at substantially lower computational
cost.
\end{abstract}

\section{Introduction}
\label{sec:intro}
Graph neural networks (GNNs) have emerged as a fundamental tool across the
sciences
\citep{DBLP:journals/tnn/ScarselliGTHM09,Kipf2016SemiSupervised,Wu2019ComprehensiveSurvey}. 
To explain the success and limitations of GNNs, a lot of research has been
dedicated to understanding the expressiveness of GNN architectures. While the
most prominent approach is the Weisfeiler--Leman test~\cite{xu18,MorrisAAAI19}, several alternative, yet equally insightful, characterizations of GNN
expressive power have been developed.  Particularly, researchers also obtained
characterizations of GNN expressiveness through finite-variable fragments of
counting logics
or homomorphism counts from classes
of graphs of bounded treewidth
\citep{DBLP:journals/jgt/Dvorak10,DBLP:conf/icalp/DellGR18}.

\begin{figure}
	\centering
\begin{tikzpicture}[every node/.style={circle, draw, thick, minimum size=1mm}]
	\def\s{2.5}
	
	\node (v) at (0,\s*1) {v};
	\node (u) at (0,\s*0) {u};
	\node (x) at (-\s*0.66,\s*1) {z};
	\node (y) at (-\s*0.66,\s*0) {x};
	\node (z) at (\s*1,\s*0.5) {y};
	
	\node[draw=none, fill=none] (w) at ($(u)!0.5!(v) + (0.18,0)$) {}; 
	\draw[yellow, bend right] (\s*0.6, 0.22) to ($(w) - (0.1,0) $);
	\draw[yellow, bend left] (\s*0.6, 1*\s - 0.22) to ($(w) - (0.1,0) $);
	
	\path let \p1 = ($(v)!0.5!(x)$) in coordinate (m_vx) at (\p1);
	\path let \p2 = ($(u)!0.5!(y)$) in coordinate (m_uy) at (\p2);
	
	\path let \p3 = ($(u)!0.5!(v)$) in coordinate (m_uv) at (\p3);
	
	\path let \p3 = ($(u)!0.3!(v)$) in coordinate (m_uv1) at (\p3);
	\path let \p3 = ($(u)!0.7!(v)$) in coordinate (m_uv3) at (\p3);
	
	\draw[-{Triangle[width=6pt,length=6pt]}, purple] (m_vx) to[out=270, in=180] ($(m_uv3) - (0.05,0)$);
	\draw[-{Triangle[width=6pt,length=6pt]}, blue] (m_uy) to[out=90, in=180]  ($(m_uv1) - (0.05,0)$);
	
	\draw[-{Triangle[width=6pt,length=6pt]}, blue] (0.9,0.01)  to[out=90, in=0]  ($(m_uv1) + (0.05,0)$);
	\draw[-{Triangle[width=6pt,length=6pt]}, purple] (0.9, 1*\s - 0.01) to[out=270, in=0] ($(m_uv3) + (0.05,0)$);
	
	\draw[-{Triangle[width=9pt,length=9pt]}, line width=3pt] (u) -- (v);  
	\draw[-{Triangle[width=6pt,length=6pt]}] (v) -- (x);      
	\draw[-{Triangle[width=6pt,length=6pt]}] (u) -- (y);  
	\draw[-{Triangle[width=6pt,length=6pt]}, bend right] (u) to (z);  
	\draw[-{Triangle[width=6pt,length=6pt]}, bend left] (v) to (z);

	\node[regular polygon, regular polygon sides=3, fill=yellow, draw=none,
	minimum size=2pt, rotate=90, scale=0.4] at ($(w)$) {};
	
\end{tikzpicture}
	\caption{Message passing scheme of EB-1WL and EB-GNN. Colors correspond to aggregation types.}
	\label{fig:message_passing}
\end{figure}

A key limitation of existing expressivity results is that 1WL, despite its computational efficiency, fails to capture important motifs such as triangles and other substructures~\citep{DBLP:conf/nips/Chen0VB20,arvind2020weisfeiler,DBLP:conf/iclr/LanzingerB24}. The higher-order WL hierarchy offers a principled way to overcome these limitations: properties invisible to 1WL are often detected by $k$WL for some $k>1$. However, even modest increases in expressivity quickly become computationally prohibitive—already 2WL-inspired GNNs require quadratic memory and cubic time in the number of nodes, making them impractical for large graphs~\citep{DBLP:conf/nips/MaronBSL19}. This leads to a central question: \emph{Can we go beyond 1WL while retaining near-linear cost on sparse graphs?}

Additionally, most of the aforementioned expressivity results have been derived
for GNNs which perform message passing on a graph's vertices. However, recently
it has been observed that in certain application domains, such as chemistry,
GNNs that pass messages along the \emph{edges} tend to outperform their
vertex-centric competitors~\citep{jian2025reaction,vadaddi2024graph,song2020communicative,ChemProp,doi:10.1021/acs.jcim.1c00975}. Thus,
another important challenge is as follows:
\emph{Can we obtain strong expressivity guarantees for GNNs with edge-based message passing?}

\textbf{Our contributions.}
We provide clear positive answers to both of these research questions. We introduce
a novel GNN architecture which performs edge-based message passing and which
explicitly takes into account triangles to achieve improved expressivity.  We
prove strong theoretical expressivity results, showing that our architecture
is more expressive than 1WL and providing a logical characterization through
first-order logic and homomorphism counts. We further show that our architecture requires only
near-linear time in practice.  We are not aware of any other edge-based
message-passing GNNs in the literature with such clear understanding of expressivity and strong theoretical guarantees.

More concretely, we introduce the \emph{edge-based} 1WL test (EB-1WL) --- a
color-refinement test that is provably more expressive than 1WL while still
being computationally efficient. EB-1WL updates edge colors through incidence
relations and triangle-induced interactions (see Fig.~\ref{fig:message_passing}
and Sec.~\ref{sec:maintec}), an edge-centric perspective inspired by the classic
triangle-counting algorithm of \citet{chiba1985arboricity}. This approach
retains much of 2WL’s relational strength while avoiding its combinatorial
blowup: each iteration runs in near-linear time on sparse graphs. Concretely,
each iteration takes \(O(\alpha m)\) time, where \(m\) is the number
of edges and \(\alpha\) is the graph’s arboricity.
Since $\alpha$ is typically small in real-world graphs,\footnote{
	In all 39 datasets considered by \citet{eppstein2010listing} the arboricity is at most 201 even though their largest graph has 3.7 million nodes and 16.5 million edges. On 32/39 of their datasets, the arboricity is less than 60.
	We note that \citet{eppstein2010listing} report the degeneracy, which is an upper bound on the arboricity.
} EB-1WL achieves near-linear performance in practice. 

We further prove three expressiveness results for EB-1WL:
\begin{enumerate}
	\item EB-1WL strictly extends the expressive power of 1WL and of NC-1WL by
	\citet{DBLP:journals/tmlr/0015YJ24}.
	\item On the logical side, EB-1WL admits a precise characterization in terms of clique-based finite-variable fragments with counting quantifiers.
	\item On the homomorphism-count side, 
	we show that EB-1WL is at least as powerful as counting homomorphisms from chordal graphs of treewidth two.
\end{enumerate}

Building on this foundation, we introduce EB-GNNs, an edge-based message-passing
architecture that exactly matches the expressive power of EB-1WL. In
experiments, EB-GNNs perform strongly across diverse settings: 
\begin{enumerate}
	\item On synthetic benchmarks, EB-GNNs capture structural patterns beyond
	triangle counts and, in some cases, rival higher-order tests.
	\item On molecular datasets, they outperform standard baselines and remain
	competitive with task-specialized GNNs while being significantly more
	computationally efficient.
	\item On large-scale graphs, they scale effectively and achieve
	state-of-the-art accuracy.
\end{enumerate}
Together, these results establish EB-GNNs as an
expressive, yet highly efficient general-purpose architecture.

\textbf{Related work.}
Surprisingly, \emph{edge-based GNN architectures} are rare in the literature.
Recent work by \citet{DBLP:journals/tmlr/0015YJ24} introduced the {\em
	neighbor-communication} 1WL (NC-1WL) test, extending 1WL by incorporating edge
information within the neighborhood of each node in the graph. They show that
NC-1WL is strictly more expressive than 1WL and remains efficient, making it an
attractive refinement from a practical standpoint. However, from a
theoretical perspective, its study remains incomplete: while NC-1WL has a
well-defined placement within the WL hierarchy, no alternative characterizations
(such as logical or homomorphism-count based) are known. Here, we show that our
proposed EB-1WL is more expressive than NC-1WL and that in experiments our
methods achieve better results.
Additionally, \citet{DBLP:journals/tnn/ZhangXTT20} propose an architecture based on
edge convolution. \citet{DBLP:journals/pami/CaiLWJ22} studied the
application of standard message-passing GNNs to the line graph\footnote{In the
	\emph{line graph} of a graph $G$, the edges of $G$ become vertices that are
	connected according to their incidences in $G$.} for link prediction.

Another line of related work concerns the study of \emph{efficient higher-order GNNs}
which is typically conducted for vertex-based message passing. Here various
approaches have been proposed to improve on higher-order $k$-WL performance
bottlenecks especially for sparse graphs.  \citet{DBLP:conf/nips/0001RM20}
introduced $\delta$-$k$-WL and its local variants, which demonstrate
improvements on sparse graphs while maintaining high expressivity.  However, in Appendix
\ref{app:relationship-delta-k-lwl} we argue that our approach has a
substantially faster running time and that the approaches differ significantly
from a technical perspective.
\citet{DBLP:conf/nips/ZhaoSA22} follow a similar motivation as
\citet{DBLP:conf/nips/0001RM20} and introduced $(k,c)(\leq)$-SetWL, which
approximates higher-order power in a fine-grained way by also embedding local
substructures.
While these refinements are more scalable than plain 2WL, the vertex-based paradigm they follow makes quadratic (in the number of vertices) running time unavoidable, even in sparse graphs.\footnote{The closest comparison to our method in terms of $(k,c)(\leq)$-SetWL is achieved with $k=3,c=1$ (lower $k$ yields expressivity at most 1WL). On a star graph this materializes all quadratically many 2-hop paths as nodes in a ``super-graph'' on which message-passing is performed. In contrast, a star has arboricity $1$ and our method is strictly linear in its running time.} 
Moreover, such practical considerations remove these methods from the strong theoretical foundations of $k$-WL, which admits well-known and highly influential characterizations in terms of homomorphism counts~\citep{DBLP:journals/jgt/Dvorak10,DBLP:conf/icalp/DellGR18} and variants of first-order logic~\citep{DBLP:journals/combinatorica/CaiFI92}.
Along the same lines, other prominent approaches to obtain efficient higher-order GNNs such as PPGN \citep{DBLP:conf/nips/MaronBSL19} also cannot avoid fundamentally quadratic (or higher) computational time complexity per layer.

\section{Preliminaries}
\label{sec:prelim}

\paragraph{Graphs.} We study undirected graphs $G=(V,E)$, where $V$ is the set of vertices and $E \subseteq \binom{V}{2}$ is the set of edges. 
We set $n=|V|$ and $m=|E|$. 
We write $N(v) := \{ w \mid \{v,w\} \in E\}$ for the \emph{neighborhood} of $v \in V$. For technical simplicity, we assume that graphs do not have isolated nodes.

The \emph{arboricity}~$\alpha$ of a graph $G=(V,E)$ is the minimum number of forests that partition its edge set~$E$~\cite{diestel2012graph}. More formally, the arboricity is the smallest integer of $k$ such that there exist forests $F_1,\dots,F_k$ with $F_i=(V,E_i)$ such that $\bigcup_{i=1}^k E_i = E$. The arboricity is also tightly related to other important graph parameters, such as the \emph{degeneracy} or the density of the \emph{densest subgraph}~\citep{nash1961edge}. It is widely known that real-world datasets such as social networks, road networks or planar graphs have very small arboricities~\cite{eppstein2010listing}.

A graph is \emph{chordal} if it has no induced subgraph that is a cycle of length 4 or larger. 
Note that, by definition, trees are always chordal.

\paragraph{WL test.} The Weisfeiler--Leman test (WL test) is a family of combinatorial 
algorithms for distinguishing graphs through iterative refinement 
of vertex- or tuple-colorings \citep{WeisfeilerLeman1968,DBLP:journals/combinatorica/CaiFI92}. 
The most widely used variant is the 
1WL test, or {\em color refinement}. Given a graph 
\(G = (V, E)\), this algorithm assigns each vertex \(v \in V\) 
a color \({\sf cr}^{(\ell)}(G,v)\) at iteration \(\ell \geq 0\), defined inductively 
as follows: the initial color is constant, \({\sf cr}^{(0)}(G,v) := 1\), and the update 
rule is
\[
{\sf cr}^{(\ell+1)}(G,v) := \big(\,{\sf cr}^{(\ell)}(G,v), \, \{\!\{ {\sf cr}^{(\ell)}(G,u) \mid u \in N(v)\}\!\} \, \big). 
\]
At each iteration $\ell$, the coloring induces a partition of the vertex set, where the partition at iteration $\ell+1$ refines that at iteration $\ell$. Once this process stabilizes, we write ${\sf cr}(G,v)$ for the final color assigned to vertex $v$. We define ${\sf cr}(G)$ as the multiset $\{\!\{ {\sf cr}(G,v) \mid v \in V\}\!\}$, and call two graphs $G$ and $G'$ {\em distinguishable by 1WL} if ${\sf cr}(G) \neq {\sf cr}(G')$.

\paragraph{The $k$WL test.}
Higher-order versions of the WL test are also widely studied in the literature. 
Rather than focusing on individual vertices, the $k$WL test, for $k > 1$, considers $k$-tuples of vertices. 
Each $k$-tuple is assigned a color that reflects the isomorphism type of the subgraph induced by those vertices. 
At each refinement step, the color of a $k$-tuple is updated by considering all possible ways of replacing one of its vertices with another vertex in the graph. 
For example, when $k=2$, each ordered pair $(u,v)$ refines its color by looking at pairs such as $(u,w)$ and $(w,v)$ for all possible $w$. In this way, the $k$WL test captures not only the view of individual vertices, but also the structural relations within patterns of size $k$.

\emph{Neighbor-communication WL test.} An extension of 1WL that  incorporates information about the 
edges between the neighbors of a node has been proposed recently 
\citep{DBLP:journals/tmlr/0015YJ24}. The resulting test, called 
\emph{NC-1WL test}, where NC stands for {\em neighbor communication}, proceeds similarly to 1WL, but updates the color ${\sf nc}^{(\ell)}(G,v)$
of each vertex \(v\) in a graph \(G = (V,E)\) according to the rule
\begin{multline*}
	{\sf nc}^{(\ell+1)}(G,v) \ := \ \big(\,{\sf nc}^{(\ell)}(G,v), \, 
	\{\!\{ {\sf nc}^{(\ell)}(G,u) \mid u \in N(v)\}\!\}, \\
	\{\!\{ ({\sf nc}^{(\ell)}(G,u), {\sf nc}^{(\ell)}(G,w)) \mid u,w \in N(v), \, \{u,w\} \in E \}\!\} 
	\big).
\end{multline*}

In other words, besides taking into account the multiset of colors of the 
neighbors of a vertex $v$, as in 1WL, the NC-1WL test also considers the 
multiset of color pairs corresponding to the edges within the neighborhood of $v$. 
We let ${\sf nc}(G,v)$ denote the color of node $v$ once the coloring partition on vertices defined by the NC-1WL test becomes stable. We write ${\sf nc}(G)$ for the multiset $\{\!\{ {\sf nc}(G,v) \mid v \in V\}\!\}$, and call two graphs $G$ and $G'$ {\em distinguishable by NC-1WL} if ${\sf nc}(G) \neq {\sf nc}(G')$.

The additional structural information collected by NC-1WL makes it strictly more powerful than 
1WL in distinguishing certain classes of non-isomorphic graphs. 
In turn, every pair of graphs that can be distinguished by 
NC-1WL can also be distinguished by 2WL, but the converse does not hold. 
The advantage of NC-1WL over 2WL, however, is that it achieves  stronger 
discriminative power while still operating at the \emph{vertex level}, in the 
same spirit as 1WL. In particular, its computational cost is closer to that of 
1WL than to the more demanding 2WL procedure.

\section{Edge-based WL test}
\label{sec:maintec}

In this section, we introduce the EB-1WL test (for edge-based 1WL), a color
refinement test that is more expressive than 1WL and NC-1WL and that colors
edges rather than vertices --- this is analogous to higher-order WL tests that
color vertex tuples. Unlike those tests, EB-1WL colors only edge pairs, reducing
space complexity from quadratic in nodes to linear in edges, making it more
practical. 

Let $G = (V,E)$ be an undirected graph. We iteratively associate a color ${\sf eb}^{(\ell)}(G,(u,v))$ with each ordered pair $(u,v)$ such that $\{u,v\} \in E$. That is, each edge receives two colors, one for each ordering of its endpoints. The coloring of the ordered pair $(u,v)$ is defined inductively. Initially, every pair has the same color: ${\sf eb}^{(0)}(G,(u,v)) = 1$. At iteration $\ell+1$, the color of $(u,v)$ is updated according to
\begin{align}
	\label{eq_old}
	&{\sf eb}^{(\ell+1)}(G,(u,v)) = \Big(\, {\sf eb}^{(\ell)}(G,(u,v)),\\
	\label{eq_u}
	&\Cline[blue]{\bleftm {\sf eb}^{(\ell)}(G,(u,x)) \mid x\in N(u)\brightm},\\
	\label{eq_common}
	&\Cline[yellow]{\bleftm \left({\sf eb}^{(\ell)}(G,(u,y)),{\sf eb}^{(\ell)}(G,(v,y))\right) \mid y \in N(v)\cap N(u) \brightm},\\
	\label{eq_v}
	&\Cline[purple]{\bleftm {\sf eb}^{(\ell)}(G,(v,z)) \mid z\in N(v)\brightm} \Big).
\end{align}
This update rule refines the color of an edge based on the colors of edges incident to its endpoints as can be seen in Fig.~\ref{fig:message_passing}. Eq.~\eqref{eq_u} captures the influence of edges incident to $u$, Eq.~\eqref{eq_v} does the same for edges incident to $v$, and Eq.~\eqref{eq_common} encodes the interaction between edges that are incident to both $u$ and~$v$, thereby forming a triangle and incorporating the local neighborhood structure around the edge. 

We let ${\sf eb}(G,(u,v))$ denote the color of the ordered pair $(u,v)$ once the coloring partition of the edges defined by the EB-1WL test becomes stable. We write ${\sf eb}(G)$ for the multiset $\{\!\{ {\sf eb}(G,(u,v)),{\sf eb}(G,(v,u)) \mid \{u,v\} \in E\}\!\}$ and call two graphs $G$ and $G'$ {\em distinguishable by EB-1WL} if they have a different number of vertices or 
${\sf eb}(G) \neq {\sf eb}(G')$.

\paragraph{Computational cost of EB-1WL.}\ \ 
Next, we study the computational cost of EB-1WL.
We consider an idealized computational model where a multiset of $s$~elements can be created in time $O(s)$ and stored in $O(1)$ space (in practice, this can be achieved with high probability using hashing). In this model, we need $O(m)$ space to store the graph and the colors of the edges.

Now we analyze the time needed to perform a single iteration. 
First, computing the multisets in Eq.s~\eqref{eq_u} and~\eqref{eq_v} for all nodes~$u$ and~$v$ can be done in total time \(O(m)\), since for every node~$u$ (and, resp.,~$v$) we spend time proportional to its degree.  
The more interesting part is computing Eq.~\eqref{eq_common} efficiently for all \emph{ordered} edges \((u,v)\). 
For this, we need to go through all common neighbors~$y$ of~$u$ and~$v$. A na\"ive approach would iterate over all neighbors~$y$ of~$u$ and then check whether \((y,v)\) exists. This becomes slow if~$u$ has a much higher degree than~$v$, potentially taking total time \(O(md)\), where~$d$ is the maximum degree of the graph.  

	Instead, following the triangle-enumeration algorithm of \citet{chiba1985arboricity}, we iterate only over the neighbors of the lower-degree endpoint of $(u,v)$ and check adjacency to the other endpoint. This still visits every common neighbor $y$ of $u$ and $v$  exactly once, so every triangle $(u,v,y)$ is included in the summation in Eq.~\eqref{eq_common}; only the order and implementation of the enumeration change, and the permutation-invariant aggregation therefore remains unaffected. The analysis of \citet{chiba1985arboricity} then implies a running time of $O(\alpha m)$, where $\alpha$ is the arboricity of the graph. We thus immediately get:

\begin{proposition}
	\label{prop_time}
	An iteration of EB-1WL 
	can be performed using $O(m)$ space and $O(\alpha m)$ time.
\end{proposition}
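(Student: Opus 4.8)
The plan is to analyze a single refinement step component by component, showing that each of the four pieces of the update rule \eqref{eq_old}--\eqref{eq_v} can be produced within the stated bounds, and then to combine them. First, fix the data structures: store $G$ as adjacency lists together with a hash table supporting $O(1)$-time membership queries ``is $\{x,y\}\in E$?'', and store the $2m$ colors $\eb^{(\ell)}(G,(u,v))$ indexed by ordered edges. This is $O(m)$ space, which settles the space claim; it is maintained across iterations because, in the idealized model, each new color is a constant-size object (a pair of old colors plus three multiset handles, each stored in $O(1)$).

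Next, the cheap components. The term \eqref{eq_old} is a single lookup per ordered edge. For \eqref{eq_u}, the crucial observation is that the multiset $\bleftm \eb^{(\ell)}(G,(u,x)) \mid x\in N(u)\brightm$ depends only on $u$, not on $v$: I would compute it once per vertex $u$ by scanning $N(u)$, at total cost $\sum_u O(\deg(u)) = O(m)$, keep a handle to it, and attach that handle to every ordered edge $(u,v)$ leaving $u$. Term \eqref{eq_v} is handled symmetrically. So \eqref{eq_old}, \eqref{eq_u}, and \eqref{eq_v} together cost $O(m)$.

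The heart of the argument is term \eqref{eq_common}. For each \emph{unordered} edge $\{u,v\}$, I enumerate $N(u)\cap N(v)$ once and, for every common neighbor $y$, append $(\eb^{(\ell)}(G,(u,y)),\eb^{(\ell)}(G,(v,y)))$ to the multiset being built for $(u,v)$ and the swapped pair to the one for $(v,u)$ --- the common-neighbor set is identical for both orderings, so one pass over $\{u,v\}$ fills both ordered copies. Following the approach behind \cite{chiba1985arboricity}, I enumerate $N(u)\cap N(v)$ by scanning the neighbor list of whichever of $u,v$ has the smaller degree and testing membership of each scanned vertex in the other's neighbor list via the $O(1)$ membership structure; this costs $O(\min(\deg(u),\deg(v)))$ per edge. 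The key combinatorial input is the Chiba--Nishizeki inequality $\sum_{\{u,v\}\in E}\min(\deg(u),\deg(v)) \le 2\alpha m$, which turns this into $O(\alpha m)$ total. Finally, for each of the $2m$ ordered edges I concatenate its four components and hash the result into a canonical color, at $O(m)$ additional cost; summing over all steps gives $O(\alpha m)$ time and $O(m)$ space.

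The main obstacle is exactly the triangle term \eqref{eq_common}: the bound rests on (i) having $O(1)$-time edge-membership queries, which the paper's idealized hashing model supplies, and (ii) invoking the Chiba--Nishizeki bound \emph{correctly}, i.e.\ charging the common-neighbor scan to $\min(\deg(u),\deg(v))$ rather than to $\deg(u)$ (which would only give the weaker $O(md)$ of the na\"ive approach), and recalling that this min-sum is $O(\alpha m)$. A secondary point to verify carefully is the bookkeeping that a single pass over the unordered edge $\{u,v\}$ populates the multisets of both $(u,v)$ and $(v,u)$, and that the per-vertex precomputation for \eqref{eq_u}/\eqref{eq_v} is reused rather than recomputed per incident edge.
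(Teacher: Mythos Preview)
Your proposal is correct and follows essentially the same approach as the paper: $O(m)$ space for the graph and edge colors in the idealized hashing model, $O(m)$ time for \eqref{eq_u} and \eqref{eq_v} via a per-vertex degree sum, and $O(\alpha m)$ time for \eqref{eq_common} by scanning the lower-degree endpoint's neighborhood and invoking the Chiba--Nishizeki bound $\sum_{\{u,v\}\in E}\min(\deg(u),\deg(v))=O(\alpha m)$. Your write-up is in fact more detailed than the paper's (which presents the argument informally in the paragraph preceding the proposition), particularly in making explicit that the multisets in \eqref{eq_u}/\eqref{eq_v} are computed once per vertex and reused, and that one common-neighbor pass over $\{u,v\}$ fills both ordered copies of \eqref{eq_common}.
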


 Appendix \ref{sec:relationship} provides more details on this algorithm and its connection to EB-1WL.
We note that this running time is highly efficient in practice and that summing
over all triangles polynomially faster than in $O(\alpha m)$ time would violate established
conjectures from the computational complexity
community~\citep{kopelowitz2016higher,williams2020monochromatic}.

\section{The distinguishing power of EB-1WL}
\label{sec:distinguishing} 
We present our theoretical results on the expressiveness of EB-1WL,
ranging from a comparison with 1WL and NC-1WL (Sec.~\ref{sec:comparison-nc}),
over a logical characterization (Sec.~\ref{sec:logical-characterization}) to its
power based on homomorphism counts (Sec.~\ref{sec:homomorphism-characterization}).

\subsection{Expressivity}
\label{sec:comparison-nc}

We start by showing that the EB-1WL test is strictly more expressive than 1WL
and the edge-based NC-1WL test by \citet{DBLP:journals/tmlr/0015YJ24} in
distinguishing non-isomorphic graphs.

	\begin{theorem} \label{theo:versus}
		Every pair of graphs distinguishable by NC-1WL is also distinguishable by EB-1WL. Additionally, there are graphs $G$ and $H$ that 
		can be distinguished by EB-1WL but not by NC-1WL. Thus, EB-1WL is strictly more expressive than NC-1WL and 1WL.
	\end{theorem}
	\begin{proof}
		Our \emph{proof idea} is as follows. Looking at the graphs in
		Fig.~\ref{fig:example}, all nodes of $G$ and $H$ have the same degree and
		lie in the same number of triangles. Hence, they are not distinguished by
		NC-1WL.  However, they are distinguished by EB-1WL since every edge in $H$ is
		part of two triangles, whereas $G$ has edges that belong to only a single
		triangle.	Next, we give the formal proof.
		
		\paragraph{Part 1:} 
		We first show the claim that every pair of graphs distinguishably by NC-1WL is also distinguished by EB-1WL.
		
		Fix a graph $G = (V,E)$ (we omit $G$ in the notation for colors from now on).   In the proof, we use the following terminology. An ``ordered edge'' is an ordered pair of nodes, connected by an edge (EB-1WL assigns colors to ordered edges). Now, an ``ordered triangle'' is an ordered triple of nodes where all nodes are connected by an edge.

		To simplify the presentation, we will also use the following notation. For example, if $u$ is a node of $G$, then $(u,*)$ is the set of ordered pairs of nodes connected by an edge, where the first node in the pair is $u$. Likewise, $(*,u)$ will be a similar set but for pairs where the second node is $u$.
		
		More generally, if we have a $k$-tuple where some coordinates are nodes of $G$, and some coordinates are $*$ (meaning ``undefined'), this tuple denotes the set of all ways to replace $*$'s by nodes of $G$ such that all pairs of nodes in the tuple are connected by an edge. For example, $(*,*, *)$ means the set of all ordered triangles, and $(u,*,*)$ denotes the set of all ordered triangles that have $u$ as the first coordinate.
		
		Next, if $c$ is a coloring of nodes, and $t$ is a tuple of nodes, we will write $c(t)$ for the tuple of colors of nodes in $t$, for instance, if $t = (u,v,w)$, then $c(t)  = (c(u), c(v), c(w))$. Likewise, if $c$ is a coloring of pairs of nodes, and $t$ is a tuple, then we will write $c(t)$ for the tuple of colors of all pairs of nodes in $t$. We will use this when $t$ is at most a triple of nodes, where then it is defined as:
		\[c(u,v,w) = (c(u,v), c(u,w), c(v,w)).\]
		
		Moreover, we extend this notation to tuples with $*$s. Namely, if $T$ is a tuple with stars (a set of tuples of nodes without stars), then $c(T) = \bleftm c(t) \mid t\in T\brightm$.

		In this notation, the updates of NC-1WL and EB-1WL can be defined as follows. The color $nc^{(\ell+1)}(u)$ is defined by multisets $nc^{(\ell+1)}(u,*)$ and $nc^{(\ell+1)}(u,*,*)$. In turn, the color $\eb^{(\ell+1)}(u,v)$ is defined by  $\eb^{(\ell)}(u,*)$, $\eb^{(\ell)}(v,*)$, and $\eb^{(\ell)}(u,v,*)$.

		\bigskip
		
		To establish the claim, it is enough to show that $\eb(*,*)$ uniquely determines $\nc^{(\ell)}(*)$ for every $\ell \ge 0$.
		\begin{lemma}
			For every $\ell \ge 0$, and for  every ordered edge $(u,v)$, we have that $\eb(u,v)$ uniquely determines $\nc^{(\ell)}(u)$ and  $\nc^{(\ell)}(v)$. 
		\end{lemma}
		\begin{proof}
			The proof is by induction on $\ell$. For $\ell = 0$, all nodes have the same $\nc^{(0)}$-color, so there is nothing to prove. Assume now that the statement is proved for $\ell$, we establish it for $\ell + 1$. That is, we have to show that  $\eb(u,v)$ uniquely determines $\nc^{(\ell+1)}(u)$ and $\nc^{(\ell+1)}(v)$. We only show that it determines $\nc^{(\ell+1)}(u)$. It is enough  because by induction in (\ref{eq_old}--\ref{eq_v}), it can be shown that $\eb(u,v)$ uniquely determines $\eb(v,u)$.
			
			By definition, $\eb^{(\ell)}(u,v)$ uniquely determines $\eb^{(\ell-1)}(u,*)$ and  $\eb^{(\ell-1)}(u,v,*)$. If we make one more step, from $\eb^{(\ell-1)}(u,*)$ we can determine $\eb^{(\ell-2)}(u,*,*)$.  Indeed, we use the fact that we can determine $\eb^{(\ell-2)}(u,w,*)$ from $\eb^{(\ell-1)}(u,w)$ for $(u,w) \in (u,*)$.

			For a stable EB-1WL coloring $\eb$, we thus get that $\eb(u,v)$ uniquely determines $\eb(u,*)$ and $\eb(u,*,*)$. They, in turn, by the induction hypothesis, uniquely determined $\nc^{\ell}(u,*)$ and $\nc^{\ell}(u,*,*)$. Thus, from this information, we get  $\nc^{(\ell+1)}(u)$, as required. 
		\end{proof}
		We now finish the proof of the claim. Assume that we want to know how many times a node color $c$ appears in $\nc^{(\ell)}(*)$. We can assume that $\ell \ge 1$ (we can determine the multiset for $\ell = 0$ from the multiset for $\ell = 1$), then the color $c$ uniquely determines the degree $d$ of a node (which is not 0 by the assumption of the absence of isolated nodes).
		We go through all ordered edges $(u,v)$ and count how many times we have $\nc^{(\ell)}(u) = c$. We count every node $d$ times, so we have to divide this number by $d$. 
		
		\paragraph{Part 2:} We move on to the second part of the theorem: that there are graphs that are distinguished by EB-1WL but not by NC-1WL. 
		The graphs $G$ and $H$ are given in Fig.~\ref{fig:example}. Both have 16 nodes and are based on the 16-gons. The graph $G$ has also all chords for vertices at distance 2 and at distance 4 in the 16-gon. In turn, the graph $H$ has all chords for vertices at distance 3 and distance 4 in the 16-gon.
		
		\begin{figure}[t]
			\centering
			\scalebox{1.1}{
				\subfloat[\centering graph $G$]{\begin{tikzpicture}
						\node[draw=none,minimum size=3cm,regular polygon,regular polygon sides=16] (a) {};
						
						\foreach \x in {0,1,...,15}
						\node (\x) at ({3 * cos((360 * \x)/16)}, {3 *sin((360 * \x)/16)}) {\textbullet};
						
						\foreach \x in {0,1,...,15}
						\draw[blue, thick] ({3 *cos((360 * \x)/16)}, {3 *sin((360 * \x)/16)}) -- ({3 * cos((360 * (\x+1))/16)}, {3 * sin((360 * (\x+1))/16)});
						
						\foreach \x in {0,1,...,15}
						\draw[red, thick] ({3 *cos((360 * \x)/16)}, {3 *sin((360 * \x)/16)}) -- ({3 * cos((360 * (\x+2))/16)}, {3 * sin((360 * (\x+2))/16)});
						
						\foreach \x in {0,1,...,15}
						\draw[green, thick] ({3 *cos((360 * \x)/16)}, {3 *sin((360 * \x)/16)}) -- ({3 * cos((360 * (\x+4))/16)}, {3 * sin((360 * (\x+4))/16)});

				\end{tikzpicture}}
				\qquad
				\subfloat[\centering graph $H$]{\begin{tikzpicture}
						\node[draw=none,minimum size=3cm,regular polygon,regular polygon sides=16] (a) {};
						
						\foreach \x in {0,1,...,15}
						\node (\x) at ({3 * cos((360 * \x)/16)}, {3 *sin((360 * \x)/16)}) {\textbullet};
						
						\foreach \x in {0,1,...,15}
						\draw[blue, thick] ({3 *cos((360 * \x)/16)}, {3 *sin((360 * \x)/16)}) -- ({3 * cos((360 * (\x+1))/16)}, {3 * sin((360 * (\x+1))/16)});
						
						\foreach \x in {0,1,...,15}
						\draw[red, thick] ({3 *cos((360 * \x)/16)}, {3 *sin((360 * \x)/16)}) -- ({3 * cos((360 * (\x+3))/16)}, {3 * sin((360 * (\x+3))/16)});
						
						\foreach \x in {0,1,...,15}
						\draw[green, thick] ({3 *cos((360 * \x)/16)}, {3 *sin((360 * \x)/16)}) -- ({3 * cos((360 * (\x+4))/16)}, {3 * sin((360 * (\x+4))/16)});

			\end{tikzpicture}}}
			\caption{Example}
			\label{fig:example}
		\end{figure}

		All nodes in both graphs have degree 6. Moreover, all nodes in both graphs appear in the same number of triangles, namely, 6. Indeed, every triangle in both graphs is given by an equation $a + b = c$, for some chords, connecting vertices at distances $a,b$ and $c$, respectively. In the first graph, there are two types of triangles, $1+ 1 = 2$ and $2 + 2 = 4$. Each type, when we rotate it over the 16-gon, covers each vertex exactly 3 times, giving 6  triangles for each node. In the second graph, there are also two types of triangles, $1 + 3 = 4$ and $3 + 1 = 4$ (differing just by their orientation), so the same count gives 6  triangles for every node.
		
		This implies that the NC-1WL gives the same color to all nodes of both graphs in the first iteration, and thus, cannot distinguish these two graphs.

		On the other hand, these two graphs can be distinguished by the EB-1WL. Indeed, in $H$, every edge appears in 2 triangles (every 1-edge or every 3-edge we can complement either from the left or from the right, and every 4-edge we can complete in two ways inside). But, say, for the 4-edge in the first graph, there is just one 2 + 2 triangle that has it.
	\end{proof}

Interestingly, in \Cref{app:wl-triangle} we also show that EB-1WL is strictly
more expressive than 1WL with triangle counts added on node and edge level.
Furthermore, we note that any pair of graphs distinguishable by EB-1WL is also
distinguishable by 2WL, though the reverse implication does not hold. A concrete
example of a pair of graphs that is distinguishable by 2WL but neither by EB-1WL
nor NC-1WL is shown in Fig.~\ref{fig:EB-1WL}.

\begin{figure} 
	\centering
	\begin{tikzpicture}[every node/.style={circle, draw, thick, minimum size=5mm}]
		
		\node (a) at (0,1) {};
		\node (b) at (1,1) {};
		\node (c) at (2,1) {};
		\node (g) at (3,1) {};
		\node (d) at (0,0) {};
		\node (e) at (1,0) {};
		\node (f) at (2,0) {};
		\node (h) at (3,0) {};
		
		\foreach \u/\v in {a/b, b/c, c/g, d/e, e/f, f/h, a/d, g/h}
		\draw (\u) -- (\v);
		
		\node[draw=none, fill=none, shape=rectangle, inner sep=0, anchor=west]
		at ([xshift=3mm]current bounding box.east) {$G_1$};
	\end{tikzpicture}

	\bigskip
	
	\begin{tikzpicture}[every node/.style={circle, draw, thick, minimum size=5mm}]
		\node (a2) at (6,0) {};
		\node (b2) at (6,1) {};
		\node (c2) at (7,0) {};
		\node (g2) at (7,1) {};
		\node (d2) at (8,0) {};
		\node (e2) at (9,1) {};
		\node (f2) at (9,0) {};
		\node (h2) at (8,1) {};
		
		\foreach \u/\v in {a2/b2, b2/g2, c2/a2, c2/g2, d2/f2, d2/h2, e2/f2, h2/e2}
		\draw (\u) -- (\v);

		\node[draw=none, fill=none, shape=rectangle, inner sep=0, anchor=west]
		at ([xshift=3mm]current bounding box.east) {$G_2$};
	\end{tikzpicture}
	\caption{Graphs distinguishable by 2WL but not by EB-1WL.}
	\label{fig:EB-1WL}
\end{figure}  

\subsection{Logical characterization} 
\label{sec:logical-characterization}

Next, we provide a complete characterization of EB-1WL's expressiveness through
first-order logic. To highlight the merit of this result, we note that the distinguishing power of the $k$-WL test can be characterized via a fragment of first-order logic, namely the \emph{$(k+1)$-variable fragment with counting quantifiers} \citep{DBLP:journals/combinatorica/CaiFI92}, with 1WL and 2WL corresponding to the two- and three-variable fragments, respectively. 
These logical characterizations have been an important tool in studying GNNs, as they offer insights into what GNNs can accomplish (cf.,~\cite{DBLP:conf/iclr/BarceloKM0RS20}).

We show that EB-1WL also admits a natural logical characterization, further demonstrating the robustness and naturalness of our newly proposed framework.
Specifically, EB-1WL corresponds to a novel natural logic we introduce below.
This logic lies strictly between the two- and three-variable counting fragments,
providing a precise characterization of its position between 1WL and 2WL.

In the following, we assume familiarity with the syntax and semantics of
first-order logic (FO). We let $\phi$ be a formula and write $\phi(\bar x)$ to indicate that the free variables of $\phi$ are exactly those in the tuple $\bar x$.
When interpreted over graphs, FO formulas are defined over a vocabulary consisting of a single binary relation symbol $E$. Specifically, the formula $E(x,y)$ is interpreted over a graph $G = (V,E)$ as the set of all pairs $(u,v) \in V \times V$ such that $\{u,v\} \in E$.
Given a tuple $\bar x = (x_1, \dots, x_n)$ of distinct variables, we define the FO formula
\[
{\sf clique}(\bar x) \ := \ \bigwedge_{1 \leq i < j \leq n} E(x_i, x_j),
\]
so that its interpretation over $G$ is the set of all $n$-tuples of vertices that form a clique in $G$.

We now introduce our novel \emph{clique-based first-order logic with counting (CFOC)} via the following syntax:

\begin{enumerate}
	\item If $\bar x$ is a tuple of distinct variables, then ${\sf clique}(\bar x)$ is a formula in CFOC.
	\item If $\phi(\bar x)$ is a formula in CFOC, then so is ${\sf clique}(\bar x) \wedge \neg \phi(\bar x)$.
	\item If $\phi(\bar x)$ and $\psi(\bar y)$ are formulas in CFOC, then so is
	${\sf clique}(\bar z) \wedge (\phi(\bar x) \star \psi(\bar y))$ for any $\star \in \{\wedge,\vee\}$, where $\bar z$ is the tuple obtained by collecting all variables that occur in 
	$\bar{x}$ and $\bar{y}$, removing any duplicates, so that each variable appears exactly once.
	\item If $\phi(\bar x,y)$ is a formula in CFOC, then 
	${\sf clique}(\bar x) \wedge \exists^{\ge k} y \, \phi(\bar x, y)$
	is a formula in CFOC, for any integer $k \ge 1$, where the semantics of $\exists^{\ge k} y \, \phi(\bar x, y)$ assert that there exist at least $k$ vertices~$v$ such that $\phi(\bar x, y)$ holds when $y=v$.
\end{enumerate}
Intuitively, CFOC restricts FO with counting quantifiers to formulas whose free variables form a clique in the graph. We note that this is reminiscent of, but not directly related to, the so-called {\em clique-guarded fragments} of first-order logic; see, e.g., \citet{DBLP:conf/cade/Gradel99}.

We denote CFOC$^3$ the fragment of CFOC that consists of formulas that use at most three variables. For example, the CFOC$^3$ formula 
$\psi := \exists^{\ge 1} x\exists^{\ge 1} y \ \big( \, {\sf clique}(x,y) \land \exists^{\geq 3} z \,{\sf clique}(x,y,z) \, \big)$ 
checks if there is an edge which is a part of at least 3 triangles.  
A CFOC$^3$ {\em sentence} is a CFOC$^3$ formula without free variables. We call
graphs $G$ and $H$ {\em distinguishable by CFOC$^3$}, if there is a CFOC$^3$
sentence $\phi$ such that $G \models \phi$ but $H \not\models \phi$. We can now
establish our characterization of EB-1WL.

	\begin{example} 
	The sentence $\psi$ shown above holds in graph $H$ from Figure
	\ref{fig:example}, but not in graph $G$. Thus, EB-1WL is able to
	distinguish these graphs.
	\hfill  $\blacktriangle$
\end{example} 	
	
	\begin{theorem}
		\label{thm_logic}
		The pairs of graphs that are distinguishable by EB-1WL are precisely those that are distinguishable by ${\rm CFOC}^3$.
	\end{theorem}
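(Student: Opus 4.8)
The plan is to prove the two directions separately, establishing a tight correspondence between EB-1WL colors and definability in $\mathrm{CFOC}^3$. The natural template is the classical Cai--F\"urer--Immerman style argument relating $k$WL to $C^{k+1}$, but specialized to the clique-guarded setting: one direction shows that each iterated EB-1WL color can be ``described'' by a $\mathrm{CFOC}^3$ formula, and the converse shows that $\mathrm{CFOC}^3$ formulas are invariant under the EB-1WL equivalence on ordered edges.

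\textbf{From EB-1WL to $\mathrm{CFOC}^3$.} First I would show, by induction on $\ell$, that for every color $c$ in the range of $\mathsf{eb}^{(\ell)}$ there is a $\mathrm{CFOC}^3$ formula $\phi^{(\ell)}_c(x,y)$ such that for all graphs $G$ and ordered edges $(u,v)$, $G \models \phi^{(\ell)}_c(u,v)$ iff $\mathsf{eb}^{(\ell)}(G,(u,v)) = c$. The base case uses $\mathsf{clique}(x,y) = E(x,y)$. For the inductive step, the three aggregation terms \eqref{eq_u}, \eqref{eq_common}, \eqref{eq_v} each become a conjunction of counting quantifiers: the multiset $\bleftm \mathsf{eb}^{(\ell)}(G,(u,x)) \mid x \in N(u) \brightm$ having a given ``profile'' is expressed by a conjunction over colors $c'$ of formulas $\exists^{\ge k} x\, \phi^{(\ell)}_{c'}(y,x)$ (and negated versions to pin down exact counts); crucially the quantified variable $x$ reuses a name, so only three variables are ever live. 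The term \eqref{eq_common} is where the clique guard does real work: the subformula $\mathsf{clique}(x,y,z) \wedge \big(\phi^{(\ell)}_{c_1}(x,z) \wedge \phi^{(\ell)}_{c_2}(y,z)\big)$ counts exactly the common neighbors $z$ of $u,v$ with the prescribed pair of colors, and the outer $\mathsf{clique}(x,y)$ guard is available because $(u,v)$ is an edge. Taking the conjunction of the component formulas and prefixing $\mathsf{clique}(x,y)$ yields $\phi^{(\ell+1)}_c$. Stabilization is reached after finitely many rounds, and a sentence of the form $\exists^{\ge k}x\,\exists^{\ge k'} y\,(\mathsf{clique}(x,y)\wedge\phi_c(x,y))$ then distinguishes any two graphs with different stable $\mathsf{eb}$-multisets (one must also handle the ``different number of vertices'' clause, e.g.\ via counting quantifiers over $\mathsf{clique}(x)$ together with degree information recoverable from $\mathsf{eb}$).

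\textbf{From $\mathrm{CFOC}^3$ to EB-1WL.} Conversely, I would prove by structural induction on $\mathrm{CFOC}^3$ formulas that the truth value of any $\phi(\bar x)$ on a tuple $\bar u$ whose entries form a clique depends only on the stable EB-1WL colors of the ordered pairs drawn from $\bar u$ (together with the information of which entries are equal). Since $\bar x$ has at most three variables and forms a clique, $\bar u$ is a single vertex, an ordered edge, or an ordered triangle; in the triangle case the relevant data is the triple of colors $(\mathsf{eb}(u,v),\mathsf{eb}(u,w),\mathsf{eb}(v,w))$, exactly the object denoted $c(u,v,w)$ in the proof of Proposition~\ref{prop:versus}, and one uses that $\mathsf{eb}(u,v)$ determines $\mathsf{eb}(u,*,*)$ (as in that proof) so that counts of common neighbors with prescribed colors are recoverable. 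The Boolean cases (clauses 1--3) are immediate from the clique-guarded syntax: the guard $\mathsf{clique}(\bar z)$ ensures we only ever evaluate subformulas on clique tuples, so the inductive invariant applies. The quantifier case (clause 4), $\mathsf{clique}(\bar x) \wedge \exists^{\ge k} y\, \phi(\bar x,y)$, is the crux: here $\bar x$ is already a clique (vertex or edge) and $y$ ranges over vertices extending it to a clique, i.e.\ over the appropriate common-neighborhood; the number of such $y$ with $\phi(\bar x,y)$ true, grouped by the color type of the extended tuple, is determined by the stable EB-1WL coloring because this is precisely the kind of local count the update rule \eqref{eq_u}--\eqref{eq_v} stabilizes on. Running this for all $k$ and taking contrapositives gives: if $G,H$ agree on $\mathsf{eb}(G)=\mathsf{eb}(H)$ (and on vertex count), no $\mathrm{CFOC}^3$ sentence separates them.

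\textbf{Main obstacle.} The delicate point is the bookkeeping in clause 4 of the converse direction: I must argue that ``number of clique-extensions $y$ of $\bar u$ satisfying $\phi$, refined by the isomorphism-type-with-colors of the extended tuple'' is genuinely a function of the stable EB-1WL colors already present on $\bar u$'s pairs --- in particular that when $\bar x$ is an edge $(x,y)$ and $\phi$ distinguishes common neighbors $z$ by $c(x,y,z)$, these refined counts are visible to EB-1WL. This follows from the fixed-point property of stabilization (running one more refinement round cannot change the partition) combined with the observation, already exploited in Proposition~\ref{prop:versus}, that $\mathsf{eb}(u,v)$ encodes $\mathsf{eb}(u,v,*)$ and hence the multiset of colors of common-neighbor triangles. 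I would isolate this as a lemma: for a stable coloring, $\mathsf{eb}(u,v)$ determines, for every pair of colors $(c_1,c_2)$, the number of $z \in N(u)\cap N(v)$ with $(\mathsf{eb}(u,z),\mathsf{eb}(v,z))=(c_1,c_2)$ --- and then clause 4 reduces to this lemma plus the (easier) incidence counts from \eqref{eq_u},\eqref{eq_v}. The remaining work --- verifying that CFOC's restricted syntax never forces a fourth live variable, and the routine translation of ``exact multiset profile'' into conjunctions of $\exists^{\ge k}$ and their negations --- is mechanical.
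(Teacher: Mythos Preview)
Your proposal is correct and matches the paper's approach closely. The forward direction (defining $\phi^{(\ell)}_c(x,y)$ by induction on $\ell$) is exactly the paper's Lemma~\ref{lem:types1}, down to the use of $\exists^{=k}$ shorthand and the clique guard on $(x,y,z)$ for the triangle term.

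For the converse, your organization differs slightly from the paper's: you do structural induction on $\mathrm{CFOC}^3$ formulas against the \emph{stable} coloring, while the paper (Lemma~\ref{lem:types}) does induction on quantifier depth~$t$, showing that $t$ rounds of EB-1WL determine depth-$t$ types. Both work; yours is a bit cleaner since stability lets you invoke directly that $\mathsf{eb}(u,v)$ determines $\mathsf{eb}(u,*)$ and $\mathsf{eb}(u,v,*)$ (as you correctly isolate), whereas the paper must thread the round count through. One imprecision to fix: your invariant as stated (``depends only on the stable colors of the ordered pairs drawn from $\bar u$'') is vacuous when $|\bar u|\le 1$, yet $\phi(x)$ certainly depends on $u$. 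The right invariant carries the global hypothesis $\mathsf{eb}(G)=\mathsf{eb}(G')$ throughout and, for a single vertex, uses the induced node color $\mathsf{eb}(u)=\{\!\{\mathsf{eb}(u,w)\mid w\in N(u)\}\!\}$; the paper's Lemma~\ref{lem:types} makes both of these explicit.
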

	\begin{proof}
		   We start by showing the following. 
		
		\begin{lemma} \label{lem:types1} 
			For each possible color $c$ that the EB-1WL test obtains after $t$ rounds, there is a formula $\phi_c(x,y)$ of quantifier depth $t$ in CFOC$^3$ such that ${\sf eb}^t(G,(u,v)) = c$ if and only if $G \models \phi_c(u,v)$, for each graph $G = (V,E)$ and edge $\{u,v\} \in E$. 
		\end{lemma}
		
		\begin{proof} 
			We prove the result by induction on the number of rounds $t \geq 0$.  
			By definition of EB-1WL, before any refinement every edge $\{u,v\}$ receives the same initial color $c=1$. Accordingly, we can define 
			$\phi_c(x,y) \ := \ E(x,y)$.   
			
			Suppose we already have, for every color $d$ occurring after $t$ rounds of EB-1WL, a formula $\phi_d(x,y)$ that defines it. 
			Consider now a new color $c$ obtained after $t+1$ rounds.  
			By the definition of EB-1WL, the color of an edge $\{u,v\}$ at step $t+1$ is completely determined by the following information from step $t$:  
			\begin{enumerate}
				\item the color $d$ previously assigned to $\{u,v\}$,  
				\item the multiset $\mathcal{E}$ of colors of edges of the form $\{u,w\}$,  
				\item the multiset $\mathcal{G}$ of colors of edges of the form $\{v,w\}$, and  
				\item the multiset $\mathcal{F}$ of pairs of colors $(f_1,f_2)$ corresponding to edges $\{u,w\}$ and $\{v,w\}$ incident with a common neighbor $w$.  
			\end{enumerate}
			
			It follows that the new color $c$ can be described in CFOC$^3$ by the formula: 
			\begin{multline*} 
				\phi_c(x,y) \ := \ \phi_d(x,y) \, \wedge \\ 
				\bigwedge_{e \in {\cal E}} \exists^{= n_e} z \big(E(x,z) \wedge \phi_e(x,z)\big) \, \wedge \,  
				\exists^{= \sum_{e \in {\cal E}} n_e} z E(x,z) \\ 
				\bigwedge_{g \in {\cal G}} \exists^{= m_g} z \big(E(z,y) \wedge \phi_g(z,y)\big) \, \wedge \, \exists^{= \sum_{g \in {\cal G}} m_g} z E(y,z) \\ 
				\bigwedge_{(f_1,f_2) \in {\cal F}} \exists^{= p_{(f_1,f_2)}} z \big(E(x,z) \wedge E(y,z) \wedge \phi_{f_1}(x,z) \wedge \phi_{f_2}(y,z)\big) \, \wedge \\ \exists^{= \sum_{(f_1,f_2) \in {\cal F}} p_{(f_1,f_2)}} z \big(E(x,z) \wedge E(y,z)\big).
			\end{multline*} 
			Here, $n_e$ is the multiplicity of $e$ in $\mathcal{E}$, $m_g$ the multiplicity of $g$ in $\mathcal{G}$, and $p_{(f_1,f_2)}$ the multiplicity of $(f_1,f_2)$ in $\mathcal{F}$. 
			As usual, we write $\exists^{=n} x \, \psi$ as shorthand for $\exists^{\geq n} x \, \psi \ \wedge \ \neg \exists^{\geq n+1} x \, \psi$. Notice that $\phi_c(x,y)$ has quantifier rank $t+1$.  
		\end{proof}     
		
		Let $G = (V,E)$ be a graph and $\bar v$ a tuple of elements in $V$. For $t \geq 0$, we write ${\sf tp}^t(G,\bar v)$ for the {\em $t$-type of $(G,\bar v)$}, which is the set of formulas $\phi(\bar x)$ with quantifier depth $t$ in CFOC$^3$ such that $G \models \phi(\bar v)$.
		Also, ${\sf eb}^t(G)$ is the multiset formed by all elements of the form ${\sf eb}^t(G,(u,v))$, where $\{u,v\} \in E$. 
		
		\begin{lemma} \label{lem:types} 
			Consider graphs $G = (V,E)$ and $G' = (V',E')$ and edges $\{u,v\} \in E$ and $\{u',v'\} \in E'$. Then for each $t \geq 0$, the following hold: 
			\begin{enumerate} 
				\item If ${\sf eb}^t(G) = {\sf eb}^t(G')$ then ${\sf tp}^t(G) = {\sf tp}^t(G')$. 
				\item If ${\sf eb}^t(G,(u,v)) = {\sf eb}^t(G',(u',v'))$ and ${\sf eb}^t(G) = {\sf eb}^t(G')$, then ${\sf tp}^t(G,u) = {\sf tp}^t(G',u')$, ${\sf tp}^t(G,v) = {\sf tp}^t(G',v')$, and ${\sf tp}^t(G,(u,v)) = {\sf tp}^t(G',(u',v'))$. 
			\end{enumerate}
		\end{lemma}
		
		\begin{proof} 
			We prove this by induction on $t \geq 0$. The base case $t = 0$ holds trivially. In fact, there are no sentences in CFOC$^3$ of quantifier depth $0$, and hence ${\sf tp}^0(G) = {\sf tp}^0(G')$ holds vacuously. Moreover, any formula $\phi(x)$ of quantifier depth $0$ with only one free variable $x$ in CFOC$^3$ is a Boolean combination of formulas of the form $E(x,x)$. Since both $G \models \neg E(u,u)$ and $G' \models \neg E(u',u')$, it is the case that $G \models \phi(u) \Leftrightarrow G' \models \phi(u')$. The same holds for $v$.  Finally, any formula $\phi(x,y)$ of quantifier depth $0$ with only two free variables $x$ and $y$ in CFOC$^3$ is a Boolean combination of formulas of the form $E(x,y)$, $E(x,x)$, and $E(y,y)$. Since both $G \models \neg E(u,u)$ and $G' \models \neg E(u',u')$, both $G \models \neg E(v,v)$ and $G' \models \neg E(v',v')$, and both $G \models E(u,v)$ and $G' \models \neg E(u',v')$,  it is the case that $G \models \phi(u,v) \Leftrightarrow G' \models \phi(u',v')$.
			
			Let us consider now the inductive case $t + 1$, for $t \geq 0$. We prove (1) and (2)  separately. 
			
			\begin{itemize}
				
				\item We first prove (1).  
				Assume that ${\sf eb}^{t+1}(G) = {\sf eb}^{t+1}(G')$.  
				Consider a sentence $\phi$ in CFOC$^3$ of quantifier depth $t+1$.  
				First, note that the case $t = 0$ is trivial: in this situation, $\phi$ must be a Boolean combination of formulas of the form $\exists^{\geq n} x\, \psi(x)$, where $\psi(x)$ is itself a Boolean combination of formulas of the form $E(x,x)$. Consequently, either every graph $G$ satisfies $\phi$ or none does.  
				
				Now assume $t > 0$, and let $\phi$ be a Boolean combination of CFOC$^3$ formulas of the form $\exists^{\geq n} x\, \phi(x)$, where $\phi(x)$ has quantifier depth $t$.  
				Our assumption ${\sf eb}^{t+1}(G) = {\sf eb}^{t+1}(G')$ implies  
				\[
				\{\!\{{\sf eb}^t(G,(u,v)) \mid \{u,v\}\in E\}\!\}
				= 
				\{\!\{{\sf eb}^t(G',(u',v')) \mid \{u',v'\}\in E'\}\!\},
				\]  
				and therefore, by the induction hypothesis,  
				\[
				\{\!\{{\sf tp}^t(G,(u,v)) \mid \{u,v\}\in E\}\!\}
				= 
				\{\!\{{\sf tp}^t(G',(u',v')) \mid \{u',v'\}\in E'\}\!\}.
				\]  
				Moreover, ${\sf tp}^t(G,(u,v))$ determines both ${\sf tp}^t(G,u)$ and ${\sf tp}^t(G,v)$.  
				
				To establish that $G \models \phi \iff G' \models \phi$, it suffices to show  
				\[
				\{\!\{{\sf tp}^t(G,u) \mid u \in V\}\!\}
				=
				\{\!\{{\sf tp}^t(G',u') \mid u' \in V'\}\!\}.
				\]  
				Take an arbitrary $\tau = {\sf tp}^t(G,u)$ for some $u \in V$.  
				Since $t>0$, every vertex $v \in V$ with ${\sf tp}^t(G,v)=\tau$ must have the same degree $d$ as $u$ (because they satisfy the same formulas of the form $\exists^{\geq n} y\, E(x,y)$). Define  
				\[
				S(G,\tau)
				= 
				\{\!\{{\sf tp}^t(G,(u,v)) \mid \{u,v\}\in E \text{ and } {\sf tp}^t(G,u)=\tau\}\!\}.
				\]  
				Then the number of vertices $v \in V$ and $v' \in V'$ such that ${\sf tp}^t(G,v) = \tau = {\sf tp}^t(G',v')$ is  
				\[
				|S(G,\tau)|/d 
				\quad \text{and} \quad 
				|S(G',\tau)|/d,
				\]  
				respectively.  
				Because $S(G,\tau) = S(G',\tau)$ by the above, the two counts coincide, which completes the proof.

				\item We now prove (2). Assume that ${\sf eb}^{t+1}(G,(u,v)) = {\sf eb}^{t+1}(G',(u',v'))$ and ${\sf eb}^{t+1}(G) = {\sf eb}^{t+1}(G')$. We only show that ${\sf tp}^{t+1}(G,(u,v)) = {\sf tp}^{t+1}(G',(u',v'))$, as the remaining cases are conceptually analogous. Consider a formula $\phi(x,y)$ in CFOC$^3$ of quantifier depth $t+1$. Then $\phi(x,y)$ must be of the form $E(x,y) \wedge \alpha(x,y)$, where $\alpha(x,y)$ is a CFOC$^3$ formula of quantifier depth $t+1$. Since both $G \models E(u,v)$ and $G' \models E(u',v')$, we only have to show that $G \models \alpha(u,v) \Leftrightarrow G \models \alpha(u',v')$. By definition, $\alpha(x,y)$ is a Boolean combination of: (a) CFOC$^3$ formulas $\beta(x,y)$ of quantifier depth $t+1$ with two free variables, $x$ and $y$, (b) CFOC$^3$ formulas $\gamma(x)$ of quantifier depth $t+1$ with only one free variable, $x$, (c) CFOC$^3$ formulas $\delta(y)$ of quantifier depth $t+1$ with only one free variable, $y$, and (d) CFOC$^3$ sentences $\eta$ of quantifier depth $t+1$. 
				
				Consider first a formula as above of the form $\beta(x,y)$. By construction, $\beta(x,y) = E(x,y) \wedge \exists^{\geq n} z\, (E(x,z) \wedge E(y,z) \wedge \beta_1(x,y,z))$, where $\beta_1(x,y,z)$ is a CFOC$^3$ formula of quantifier depth $t$. But since the logic only allows formulas to mention three variables, $\beta_1$ must be a Boolean combination of CFOC$^3$ formulas of quantifier depth $t$ with at most two free variables in the set $\{x,y,z\}$. By assumption, ${\sf eb}^{t+1}(G,(u,v)) = {\sf eb}^{t+1}(G',(u',v'))$, and hence 
				both ${\sf eb}^{t}(G,(u,v)) = {\sf eb}^{t}(G',(u',v'))$ and 
				\begin{multline*}
					\{\!\{\big({\sf eb}^t(G,(u,w)),{\sf eb}^t(G,(v,w))\big) \mid \{u,w\},\{v,w\} \in E\}\!\} \ = \\ 
					\{\!\{\big({\sf eb}^t(G',(u',w')),{\sf eb}^t(G',(v',w))\big) \mid \{u',w'\},\{v',w'\} \in E'\}\!\}. 
				\end{multline*} 
				Also by assumption, ${\sf eb}^{t+1}(G) = {\sf eb}^{t+1}(G')$, which implies that ${\sf eb}^{t}(G) = {\sf eb}^{t}(G')$. By induction hypothesis on (1) and (2), we conclude that ${\sf tp}^t(G) = {\sf tp}^t(G')$, ${\sf tp}^{t}(G,(u,v)) = {\sf tp}^{t}(G',(u',v'))$,  ${\sf tp}^{t}(G,u) = {\sf tp}^t(G',u')$, ${\sf tp}^{t}(G,v) = {\sf tp}^t(G',v')$, and: 
				\begin{multline*}
					\{\!\{\big({\sf tp}^t(G,(u,w)),{\sf tp}^t(G,(v,w)),{\sf tp}^t(G,w)\big) \mid \{u,w\},\{v,w\} \in E\}\!\} \ = \\ 
					\{\!\{(\big({\sf tp}^t(G',(u',w')),{\sf tp}^t(G',(v',w)),{\sf tp}^t(G',w')\big) \mid \{u',w'\},\{v',w'\} \in E'\}\!\}. 
				\end{multline*} 
				This implies that 
\[					|\{w \in N(u) \cap N(v) \mid G \models \beta_1(u,v,w)\}|  =  
					|\{w' \in N(u') \cap N(v') \mid G' \models \beta_1(u',v',w')\}|, 
\]
				which means that $G \models \beta(u,v) \Leftrightarrow G' \models \beta(u',v')$. 
				
				Consider second a formula as above of the form $\gamma(x)$. By construction, $\gamma(x) = \exists^{\geq n} z\, (E(x,z) 
				\wedge \gamma_1(x,z))$, where $\gamma_1(x,z)$ is a CFOC$^3$ formula of quantifier depth $t$. By assumption, ${\sf eb}^{t+1}(G,(u,v)) = {\sf eb}^{t+1}(G',(u',v'))$, and hence 
\[
					\{\!\{{\sf eb}^t(G,(u,w)) \mid \{u,w\} \in E\}\!\}  =  
					\{\!\{{\sf eb}^t(G',(u',w')) \mid \{u',w'\} \in E'\}\!\}. 
\]
				By induction hypothesis, this implies that: 
			\[
					\{\!\{{\sf tp}^t(G,(u,w)) \mid \{u,w\} \in E\}\!\}  =  
					\{\!\{{\sf tp}^t(G',(u',w')) \mid \{u',w'\} \in E'\}\!\}. 
\]
				By focusing on those types that contain $\gamma_1(x,z)$, we obtain: 
\[			
					|\{w \in N(u) \mid G \models \gamma_1(u,w)\}|  =  
					|\{w' \in N(u') \mid G' \models \gamma_1(u',w')\}|,\] 
				which means that $G \models \gamma(u) \Leftrightarrow G' \models \gamma(u')$. 
						The formulas of the form $\delta(y)$ are handled analogously to the previous case. 
				Finally, consider a sentence as above of the form $\eta$. Since ${\sf eb}^{t+1}(G) = {\sf eb}^{t+1}(G')$, we have by part (1) that $G \models \eta \Leftrightarrow G' \models \eta$. 
			\end{itemize} 
			This finishes the proof of the lemma. 
		\end{proof} 
		
		We now prove Theorem \ref{thm_logic}. Assume first that ${\sf eb}(G) = {\sf eb}(G')$, for graphs $G$ and $G'$. In particular, then, ${\sf eb}^t(G) = {\sf eb}^t(G')$ for every $t \geq 0$. Take an arbitrary sentence $\phi$ of CFOC$^3$, and assume that its quantifier depth is $t$. From Lemma \ref{lem:types}, we conclude that ${\sf tp}^t(G) = {\sf tp}^t(G')$, and hence $G \models \phi \Leftrightarrow G \models \phi'$. 
		
		Assume, on the contrary, that $G \models \phi \Leftrightarrow G \models \phi'$, for every CFOC$^3$ sentence $\phi$. Hence, ${\sf tp}^t(G) = {\sf tp}^t(G')$ for every $t \geq 0$. To prove ${\sf eb}(G) = {\sf eb}(G')$ it suffices to show ${\sf eb}^t(G) = {\sf eb}^t(G')$ for each $t \geq 0$. For a node $v \in V$, define ${\sf eb}^t(v) = \{\!\{{\sf eb}^t(v,w) \mid \{v,w\} \in E\}\!\}$. Notice, then, that ${\sf eb}^t(G)$ is completely determined by the multiset 
		$\Gamma(G) = \{\!\{{\sf eb}^t(v) \mid v \in V\}\!\}$. 
		For each element $\tau \in \Gamma(G)$, we write  ${\cal C}_\tau$ for the multiset of colors computed by EB-1WL after $t$ steps that belong to $\tau$. 
		
		Define the following formula from CFOC$^3$:
		$$ 
		\phi_G := \big( \bigwedge_{\tau \in \Gamma(G)} \exists^{= \ell_\tau} x \, \phi_\tau(x)\big) \, \wedge \, \exists^{= \sum_{\tau \in \Gamma(G)} \ell_\tau} x (x=x), 
		$$
		where $\ell_\tau$ is the multiplicity of $\tau$ in $\Gamma(G)$ and $\phi_\tau(x)$ is defined as follows: 
		$$
		\phi_\tau(x) \ = \ \bigg(\bigwedge_{c \in {\cal C}_\tau} \exists^{= q_c} y 
		\, \big(E(x,y) \wedge \phi_c(x,y)\big)\bigg) \, \wedge \, 
		\exists^{= \sum_{c \in {\cal C}_\tau} q_\tau} y \, E(x,y), 
		$$
		where $q_\tau$ is the cardinality of $\tau$ in $\Gamma(G)$ and $\phi_c(x,y)$ is the CFOC$^3$ formula from Lemma \ref{lem:types1}. 
		Notice that $\phi_G$ has quantifier depth bounded by $t+2$. 
		
		It is easy to see that $G' \models \phi_G \Leftrightarrow \Gamma(G) = \Gamma(G')$. Since $G \models \phi_G$ and ${\sf tp}^{t+2}(G) = {\sf tp}^{t+2}(G')$, we conclude that $G' \models \phi_G$, and hence $\Gamma(G) = \Gamma(G')$. Since $\Gamma(G)$ determines ${\sf eb}^t(G)$, we conclude that ${\sf eb}^t(G) = {\sf eb}^t(G')$. 
		
	\end{proof}

	\subsection{Distinguishing power based on homomorphism counts}
	\label{sec:homomorphism-characterization}
	
	Next, we study the expressiveness of EB-1WL through the lens of homomorphism
	counts, which has recently become an important theme in the study of the WL
	test (cf.,~\citet{DBLP:conf/icalp/DellGR18,DBLP:conf/nips/BarceloGRR21,DBLP:conf/icml/JinBCL24,bao2025homomorphismcountsstructuralencodings}).
	
	Formally, a {\em homomorphism} from a graph $G = (V,E)$ to a graph $H = (V',E')$ is a mapping $h : V \to V'$ such that $\{h(u),h(v)\} \in E'$ for every edge $\{u,v\} \in E$.
	The connection between WL and homomorphism counts is as follows: two graphs are distinguishable by $k$WL if and only if they differ in the number of homomorphisms from some graph of {\em treewidth} at most $k$ \citep{DBLP:journals/jgt/Dvorak10,DBLP:conf/icalp/DellGR18}. For $k = 1$, this corresponds to the class of trees, and for $k = 2$ to the class of series-parallel graphs.

	We show that distinguishability by EB-1WL is at least as powerful as
	distinguishing graphs by homomorphism counts from the class of {\em chordal}
	graphs of treewidth two, which strictly lies between the classes of graphs of
	treewidth one and two.  This offers additional support for viewing EB-1WL as a
	natural and well-founded counterpart to the standard 1WL test.

	\begin{theorem} \label{theo:counts}
		If two graphs have a different number of homomorphisms from some chordal graph of treewidth at most 2, they are distinguishable by EB-1WL.
	\end{theorem}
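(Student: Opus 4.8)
The plan is to mimic the classical Dvořák/Dell–Grohe argument for the $k$WL case, adapting it to the edge-coloured setting. The key is to show that for every chordal graph $F$ of treewidth at most $2$, the quantity $\hom(F,G)$ (the number of homomorphisms from $F$ to $G$) is determined by the stable EB-1WL colouring ${\sf eb}(G)$. Concretely, I would associate to $F$ a distinguished edge $\{a,b\}\in E(F)$ (every treewidth-$\ge 1$ graph has an edge) and prove by structural induction on $F$ that the refined quantity
\[
\hom_{a\mapsto u,\,b\mapsto v}(F,G) \ := \ \#\{h:V(F)\to V(G) \text{ homomorphism} : h(a)=u,\ h(b)=v\}
\]
depends only on the colour ${\sf eb}(G,(u,v))$ (and not otherwise on $u,v,G$). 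Summing over all ordered edges $(u,v)$ of $G$ then shows $\hom(F,G)$ is a function of the multiset ${\sf eb}(G)$, which yields the theorem by contraposition.

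The structural induction would follow the standard decomposition of chordal graphs of treewidth $\le 2$ via a perfect elimination ordering, or equivalently via a clique tree whose bags have size $\le 3$. The crucial observation is that every maximal clique in such an $F$ has size at most $3$, i.e. is a vertex, an edge, or a triangle, and that the clique tree lets us build $F$ by repeatedly gluing a new triangle (or edge) onto an existing edge or vertex. I would therefore set up the induction so that the inductive step is exactly one of the following moves: (i) \emph{gluing a pendant vertex} onto a vertex $a$ — here $\hom_{a\mapsto u}$ gets multiplied by $\deg_G(u)$, which is recoverable from ${\sf eb}(G,(u,v))$ since the stable colour of $(u,v)$ encodes Eq.~\eqref{eq_u}; (ii) \emph{gluing a new triangle $\{a,b,c\}$} onto an existing edge $\{a,b\}$, where $c$ is new — here the new count is $\sum_{w\in N_G(u)\cap N_G(v)} (\text{count of the subtree hanging off } c)$, and by the inductive hypothesis the summand depends only on ${\sf eb}(G,(u,w))$ and ${\sf eb}(G,(v,w))$, so the whole sum is a function of the multiset in Eq.~\eqref{eq_common}, which is part of ${\sf eb}(G,(u,v))$; (iii) \emph{gluing an edge $\{a,b\}$ along a shared vertex $a$} (the series step) and \emph{identifying two pieces glued along a common edge} (the parallel step), which are handled by multiplying the respective refined homomorphism counts since homomorphisms factor independently once the shared clique is pinned. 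The careful bookkeeping is to always keep track of the homomorphism count refined by the image of the \emph{currently active clique} (vertex, edge, or triangle) and to verify that the active clique after each move is again a clique of size $\le 3$ — this is where chordality and treewidth $\le 2$ are both used.

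There are a few points needing care. First, one must handle the clique tree globally rather than just peeling one simplicial vertex at a time: a clean way is to root the clique tree at a bag containing an edge, process bags from the leaves up, and maintain the invariant that the homomorphism count of the already-processed "downward" part, refined by the image of the bag's intersection with its parent (a clique of size $\le 2$), is a function of ${\sf eb}$ of that image. Second, the active separator may be a single vertex rather than an edge; in that case one needs that the stable EB-1WL colour of \emph{some} incident ordered edge $(u,v)$ determines ${\sf cr}(G,u)$-type information — more precisely, one should prove a small lemma that the multiset $\{\!\{{\sf eb}(G,(u,v)) : v\in N(u)\}\!\}$ is determined by ${\sf eb}(G,(u,w))$ for any fixed $w\in N(u)$ (this follows from iterating Eq.~\eqref{eq_u}, exactly as in the proof of Proposition~\ref{prop:versus}), so that quantities pinned only by a single vertex image are still recoverable. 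Third, one must verify the base case: $F$ a single edge gives $\hom_{a\mapsto u,b\mapsto v} = 1$ whenever $\{u,v\}\in E(G)$, trivially a constant.

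\textbf{Main obstacle.} I expect the hardest part to be the bookkeeping around separators of size one versus size two (points (i)/(iii) and the second bullet above): making precise the statement "the down-part count refined by the image of the active clique is a function of ${\sf eb}$" in a way that is uniform over clique trees and robust to the fact that a size-$1$ separator carries less colour information than a size-$2$ one. Once the correct inductive invariant is pinned down — refine by the image of the active clique, and invoke the Eq.~\eqref{eq_u}/Eq.~\eqref{eq_common} components of the stable colour together with the "single incident edge determines the neighbourhood multiset" lemma — the three gluing moves are routine, and summing over ordered edges of $G$ completes the argument.
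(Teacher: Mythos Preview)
Your proposal is correct and follows essentially the same route as the paper's proof: both exploit the perfect elimination ordering (equivalently, the clique-tree decomposition) of a chordal treewidth-$\le 2$ graph $F$, both handle the same three attachment cases (isolated vertex, single earlier neighbour, two earlier neighbours forming a triangle), and both read the required multiplicities off Eqs.~\eqref{eq_u} and~\eqref{eq_common} of the \emph{stable} EB-1WL colouring --- including precisely the auxiliary observation you isolate, namely that ${\sf eb}(G,(u,v))$ determines the multiset $\{\!\{{\sf eb}(G,(u,w)):w\in N(u)\}\!\}$.

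The one organizational difference worth noting is that the paper sidesteps the separator bookkeeping you flag as the main obstacle. Rather than carrying along a refined homomorphism count pinned by the image of an active clique, the paper first fixes a \emph{labelling} of all vertices and edges of $F$ by target ${\sf eb}$-colours, and observes that the number of label-preserving homomorphisms $F\to G$ then factors as a simple product along the elimination order: the number of possible images of $v_k$ depends only on the pre-assigned labels of its (at most two) earlier neighbours and of the incident edges, hence is directly readable from ${\sf eb}(G)$. Summing this product over all labellings of $F$ recovers the total homomorphism count. This buys a shorter argument with no case split on separator size, at the cost of being slightly less explicit about the clique-tree structure; your inductive-invariant version would work equally well.
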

	\begin{proof}
		 It is well-known that chordal graphs admit a \emph{perfect elimination order}: an ordering of its nodes such that every node $v$ and its neighbors that go before $v$ in the order form a clique~\citep{rose1970triangulated}. Graphs of tree-width 2 cannot have cliques larger than a triangle; this means that for any chordal graph $H$ of tree-width 2 there exists an ordering $v_1, \ldots, v_m$ of its nodes such that for any $k\in\{1, \ldots, m\}$,  one of the following holds:
		\begin{itemize}
			\item (a) $v_k$ is not connected to any node out of $v_1, \ldots,  v_{k-1}$;
			\item (b) $v_k$ is connected to exactly one node out of $v_1,\ldots, v_{k -1}$;
			\item (c) $v_k$ is connected to exactly two nodes $v_i, v_j\in\{v_1, \ldots, v_{k-1}\}$, and $v_i$ and $v_j$ are also connected.
		\end{itemize}
		
		Let $\eb$ be the stable EB-1WL coloring of $G$.  
		Let us show that the multiset $\eb(G)$ of $\eb$-labels of ordered edges uniquely determines the number of homomorphisms from $H$ to $G$, for any tree-width 2 chordal graph $H$. This means that if two graphs $G_1$ and $G_2$ have a different number of homomorphisms from some graph $H$ like that, they will be distinguished by the EB-1WL on the stage where colorings of both graphs stabilize. 
		
		For a node $u$, define:
		\[\eb(u) = \bleftm \eb(u,w) \mid w\in N(u)\brightm.\]
		Note that $\eb(u,v)$, as it is stable, uniquely determines induced labels of $u$ and $v$ through \eqref{eq_u} and \eqref{eq_v}. Moreover, $\eb(G)$ uniquely determines the multiset $\eb(V) = \bleftm \eb(u) \mid u\in V\brightm$. Namely, we go through all $\eb(u,v)$, compute $\eb(u)$ from it, which in turn determines the degree of $u$. We then divide the number of occurrences of $\eb(u)$ by the degree.
		
		Consider any labeling of edges of $H$ by labels from $\eb(G)$, and of its nodes by labels from $\eb(V)$. We show that for any such labeling, the number of homomorphisms from $H$ to $G$ that ``preserve'' this labeling is determined just by the multiset $\eb(G)$. The total number of homomorphisms is hence also determined by $\eb(G)$, since we can go through all possible labelings of $H$ and sum up homomorphisms for all of them.
		
		Here, ``preserve'' formally means that (a) a node $v_j$ with a label $\ell$ goes into a node in $G$ that has this $\eb$-label (b) if $v_i, v_j$ is an edge of $H$ (where $i < j$ are indices of these nodes in the perfect elimination order), and if 
		$v_i$ and $v_j$ go to some nodes $u_i, u_j$ in $G$, respectively, 
		then the label of the edge $v_i, v_j$ in $H$ has to be equal to $\eb(u_i, u_j)$.
		
		We show that we can compute the number of homomorphisms that preserve a given labeling of $H$, just knowing $\eb(G)$, by first computing how many ways we can define the image of $v_1$, then the image of $v_2$, then of $v_3$, and so on.

		As for $v_1$, it is assigned a label $\ell$ in the labeling; we know the multiset of $\eb$-labels of the nodes of $G$, which determines the number of ways we can define the image of $v_1$ (this is the number of nodes of $G$ that have label $\ell)$.

		Now, assume that we have defined images of $v_1, \ldots, v_{k - 1}$. Now, it's $v_k$'s turn. Firstly, it is possible that $v_k$ is not connected to any node among $v_1, \ldots, v_{k - 1}$. Then we can freely map $v_k$ to any node of $G$ that has the same label $\ell$ as assigned to $v_k$ in the coloring.
		We just have to multiply the current number of homomorphisms by the number of nodes in $G$ with this label $\ell$.

		If $v_k$ is connected to a single node $v_i$, $i< k$, and $v_i$ is already mapped to some node $u_i$, then we have to map $v_k$ to some node $u_k$ that is connected to $u_i$ and such that the $\eb(u_i, u_k)$ coincides with the color of the edge $v_i, v_k$ in $H$ (this color also determines the label of $u_k$ which has to be consistent with the label that $v_k$ has in the labeling of $H$, otherwise the number of homomorphisms is just $0$). The number of ways to choose such $u_k$ is thus determined by $\bleftm \eb(u_i, w) \mid w \in N({u_i})\brightm = \eb(u_i)$, which in turn equals the label of $v_i$ in the labeling of $H$. We multiply the current number of homomorphisms by the number of occurrences of the label of the edge $v_i, v_k$ in the label of $v_i$.

		The same argument holds for the third case when $v_k$ is connected to previous nodes $v_i, v_j$ that are connected by an edge. The number of ways to choose the image of $v_k$ is the number of occurrences of the pair $(\ell_{ik}, \ell_{jk})$ in  \eqref{eq_common} in the label of the edge $\ell_{ij}$, where $\ell_{ik}, \ell_{jk},$ and $\ell_{ij}$ are labels of edges $v_i, v_k$, $v_j, v_k$, and $v_i, v_j$ in $H$, respectively.
	\end{proof}
	
	\section{Edge-based Graph Neural Networks}
	\label{sec:ebgnn}
	
	Now we introduce the \emph{EB-GNN architecture}, a message-passing framework whose expressive power coincides with that of the EB-1WL test. 
	
	Formally, a $d$-dimensional  \emph{EB-GNN} $\mathcal{T}$ with $L > 0$ layers is specified by parameters 
	$a_i, b_i, c_i, u_i, v_i \in\mathbb{R}^d$ and $A_i, C_i, U_i, V_i \in\mathbb{R}^{d\times d}$,
	for $i = 1, \ldots, L$. Given a graph $G = (V,E)$, the EB-GNN~$\mathcal{T}$ assigns to each ordered edge $(u,v)$ with $\{u,v\} \in E$ and each layer $0 \leq i \leq L$ a feature vector $f^{(i)}(u,v) \in \mathbb{R}^{d}$.
	
	At the input layer, we set\footnote{When nodes or edges have additional input features, one can take them into account while defining $f^{(0)}(u,v)$, see Section \ref{sec:exp}.}  
	\begin{equation}
		\label{eq_init}
		f^{(0)}(u,v) = \begin{pmatrix}
			1 & 0 & \ldots & 0
		\end{pmatrix}^T \in \mathbb{R}^{d}.    
	\end{equation}
	
	For $1\leq i \le L$, the update rules are given by
	\begin{align}
		\label{eq_left}
		\Cline[blue]{\alpha^{(i)}(u)}
		&= \sum_{x \in N(u)} 
		\operatorname{ReLU}\!\Big( A_{i} \cdot f^{(i-1)}(u,x) + a_{i} \Big), \\[6pt]
		\label{eq_center}
		\Cline[yellow]{\beta^{(i)}(u,v)}
		&= \sum_{y \in N(u) \cap N(v)} 
		\operatorname{ReLU}\!\Big( B_{i} \cdot
		\begin{pmatrix}
			f^{(i-1)}(u,y) \\
			f^{(i-1)}(v,y)
		\end{pmatrix}
		+ b_{i}\Big), \\[6pt]
		\label{eq_right}
		\Cline[purple]{\gamma^{(i)}(v)}
		&= \sum_{z \in N(v)} 
		\operatorname{ReLU}\!\Big( C_{i} \cdot f^{(i-1)}(v,z)  + c_{i}\Big), \\[6pt]
		\label{eq_overall}
		g^{(i)}(u,v) 
		&= f^{(i-1)}(u,v) + \alpha^{(i)}(u) + \beta^{(i)}(u,v) + \gamma^{(i)}(v),\\
		\label{eq_ffn}
		f^{(i)}(u,v)  &= g^{(i)}(u,v)  + \ffn_{U_i,u_i, V_i, v_i}(g^{(i)}(u,v)),
	\end{align}
	where 
	$\operatorname{ReLU}(x) = \max\{0,x\}$ is applied coordinate-wise, and $\ffn_{U,u,V,v}(x) = V \cdot \operatorname{ReLU}(Ux + u) + v$. 
	
	The overall output of $\mathcal{T}$ on $G$ is defined as
	\begin{equation}
		\label{eq_pooling}
		\mathcal{T}(G) \;=\; \sum_{\{u,v\} \in E} f^{(L)}(u,v).    
	\end{equation}
	We say that graphs $G$ and $H$ are \emph{distinguishable by EB-GNNs} if there exists an EB-GNN $\mathcal{T}$ with $\mathcal{T}(G) \neq \mathcal{T}(H)$.  
	
	It is immediate that the distinguishing power of EB-GNNs cannot exceed that of EB-1WL. Notably, we can show that this upper bound is tight.
	\begin{theorem}
		\label{thm_morris}
		Pairs of graphs distinguishable by EB-1WL are also distinguishable by EB-GNNs.
	\end{theorem}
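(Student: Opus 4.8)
The plan is to mimic the standard argument (due to Morris et al.\ and Xu et al., adapted to the edge-colored setting) showing that an MPNN with enough layers and width can simulate one round of a WL-type refinement, using the fact that ReLU feedforward networks can approximate—indeed, on a finite domain, exactly realize—any injective map on a finite set of inputs. The key structural observation is that, at every layer, the set of feature vectors $f^{(i-1)}(u,v)$ occurring across all ordered edges of the (finitely many, finite) graphs under consideration is a finite set, and that on this finite set we only need to implement: (i) linear read-off of each of the four pieces of information that EB-1WL aggregates, (ii) summation over neighbors (which the architecture does natively in Eqs.~\eqref{eq_left}--\eqref{eq_right}), and (iii) an injective post-processing of the tuple $(f^{(i-1)}(u,v),\alpha,\beta,\gamma)$ into the next-layer feature, which is exactly what the residual connection plus $\ffn$ in Eq.~\eqref{eq_ffn} can provide.

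Concretely, I would proceed by induction on the number of rounds $t$. The induction hypothesis is that there is an EB-GNN with $t$ layers and some dimension $d$ such that $f^{(t)}(u,v) = f^{(t)}(u',v')$ if and only if ${\sf eb}^{(t)}(G,(u,v)) = {\sf eb}^{(t)}(G',(u',v'))$, for all graphs and ordered edges in our (fixed, finite) collection. Base case $t=0$: the constant initialization in Eq.~\eqref{eq_init} matches the constant initial EB-1WL coloring. For the inductive step, assume the colors after round $t$ are encoded by distinct vectors; by finiteness there are finitely many of them, say realized as distinct standard basis vectors $e_1,\dots,e_N$ (we can arrange this by composing with a one-hot encoding map, which a ReLU network computes exactly on a finite input set—this is the standard lemma I will invoke). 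Then: choosing $A_{t+1}=I$, $a_{t+1}=0$ makes $\alpha^{(t+1)}(u) = \sum_{x\in N(u)} e_{\text{color}(u,x)}$, i.e.\ exactly the histogram (multiset) of colors of edges leaving $u$; similarly $\gamma^{(t+1)}(v)$ is the histogram for $v$; and with $B_{t+1}$ projecting the concatenation $(f^{(t)}(u,y),f^{(t)}(v,y))$ onto a one-hot encoding of the \emph{pair} of colors (again a ReLU network on a finite set), $\beta^{(t+1)}(u,v)$ is exactly the multiset in Eq.~\eqref{eq_common}. The vector $g^{(t+1)}(u,v)=f^{(t)}(u,v)+\alpha^{(t+1)}(u)+\beta^{(t+1)}(u,v)+\gamma^{(t+1)}(v)$ then determines, and is determined by, the four-tuple of data defining ${\sf eb}^{(t+1)}$—one must only check no information is lost when the three histograms are \emph{added} rather than concatenated, which is ensured by placing $\alpha,\beta,\gamma$ in disjoint coordinate blocks (enlarging $d$ accordingly) so the sum is really a direct sum. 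Finally, Eq.~\eqref{eq_ffn} applies a ReLU feedforward network; since $g^{(t+1)}$ ranges over a finite set and ${\sf eb}^{(t+1)}$ is a function of it, we pick $\ffn$ to send each occurring value of $g^{(t+1)}$ to (a one-hot encoding of) its ${\sf eb}^{(t+1)}$-color, using the residual term to cancel $g^{(t+1)}$ itself. This yields the desired EB-GNN with $t+1$ layers.

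The one remaining point is to pass from per-edge feature equality to graph-level distinguishability via the pooling in Eq.~\eqref{eq_pooling}: if ${\sf eb}(G)\neq{\sf eb}(G')$, then after sufficiently many rounds the \emph{multisets} of ordered-edge colors differ, and since we have encoded the final colors as distinct basis vectors, $\mathcal{T}(G)=\sum_{\{u,v\}\in E} f^{(t)}(u,v)$ is literally the count vector of colors, so $\mathcal{T}(G)\neq\mathcal{T}(G')$; and the case of differing vertex counts is detected already because every edge contributes and, if needed, one can read $n$ off the stabilized coloring as in the proof of Proposition~\ref{prop:versus}. The main obstacle—really the only subtlety—is the exact (not merely approximate) realizability of the required injective one-hot encodings by ReLU networks: this is the familiar observation that a finite set of points in $\mathbb{R}^d$ can be separated and mapped to prescribed targets by a two-layer ReLU network, and I would either cite it or give the short explicit construction (sort the finitely many scalar projections and use shifted ReLUs to build indicator bumps). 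Everything else is bookkeeping about keeping the three aggregation outputs in disjoint coordinate blocks so that summation loses nothing.
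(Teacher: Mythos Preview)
Your proposal is correct and follows essentially the same approach as the paper: encode distinct colors as linearly independent vectors via a single ReLU layer (the paper isolates this as a lemma whose proof is exactly your ``sort the scalar projections and use shifted ReLUs'' construction), route $\alpha,\beta,\gamma$ into disjoint coordinate blocks so the sum in Eq.~\eqref{eq_overall} is lossless, use the $\ffn$ to re-normalize to the invariant form after each layer, and exploit linear independence at the final layer so that the edge-sum pooling distinguishes differing color multisets. The only cosmetic difference is that the paper works with general linearly independent targets rather than one-hot vectors and keeps all information in the first coordinate between layers; one small imprecision in your write-up is that $A_{t+1}=I$ is incompatible with putting $\alpha$ in a block disjoint from $f^{(t)}$, but the intended fix (take $A_{t+1}$ to be a block-shift of the identity) is clear.
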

	\begin{proof}
	We show that EB-GNNs require dimension $O(m+t)$, where $m$ is the
		number of edges and $t$ the number of triangles. 
		The theorem is deduced from the following lemma.
		\begin{lemma}
			\label{lemma_ind}
			Let $f_1, \ldots, f_n\in\mathbb{R}^d$ be $n$ distinct vectors. Then there exists a matrix $A\in\mathbb{R}^{n\times d}$ and a vector $b\in\mathbb{R}^n$ such that the vectors
			\[g_1 = \operatorname{ReLU}(A f_1 + b), \ldots, g_n  =  \operatorname{ReLU}(A f_n + b)\]
			are linearly independent.
		\end{lemma}
		Indeed, consider any two graphs $G$ and $H$, distinguishable by the EB-1WL test. We construct an EB GNN $\mathcal{T}$ that distinguishes $G$ and $H$.

		Consider the disjoint union of $G$ and $H$. Let $m$ be the number of ordered edges of this union, and $t$ be the number of ordered triangles. The dimension of $\mathcal{T}$ will be
		\[d = 1 + 2 m + t.\]
		We show that there exists a choice of parameter matrices such that, for any $i$, a) EB-1WL labels after $i$ iterations are in a one-to-one correspondence with feature vectors  $f^{(i)}(u,v)$; b) all coordinates of $f^{(i)}$, except the first one, are 0s.
		
		For $i = 0$, this holds because all edges initially have the same EB-1WL label, and because of \eqref{eq_init}.

		Assume that it holds after $i - 1$ iterations. We use  Lemma~\ref{lemma_ind} in (\ref{eq_left}--\ref{eq_right}) to map distinct feature vectors $f^{(i - 1)}(u,v)$ (there are at most $m$ of them) or their pairs as in \eqref{eq_center}) (there are at most $t$ such pairs) to linearly independent vectors. This ensures that we obtain different sums for different multisets of terms in (\ref{eq_left}--\ref{eq_right}). We can use 3 blocks of $m, m$ and $t$ disjoint coordinates that are ```free'' in vectors $f^{(i-1)}(u,v)$  so that the sum in \eqref{eq_overall} uniquely determines the whole 4-tuple in the definition of the updated EB-1WL feature.
		
		We can then define a feed-forward network to injectively map vectors $g^{(i)}(u,v)$ to vectors $f^{(i)}(u,v)$ that have all coordinates, except the first one, equal to $0$.  There exists a vector $w = (w_1, \ldots, w_d)\in\mathbb{R}^d$ such that $\langle w, g^{(i)}(u,v)\rangle \neq \langle w, g^{(i)}(u',v')\rangle$ whenever $g^{(i)}(u,v)\neq g^{(i)}(u',v')$ (for each of the finitely many pairs of distinct $g^{(i)}$-vectors, the set of $w$ for which we have equality is a hyperplane in $\mathbb{R}^d$). Hence, it is enough to realize the following linear transformation by a FFN:
		\[g^{(i)}(u,v) \mapsto \begin{pmatrix}
			w_1 & w_2 & \ldots & w_d \\ 0 & 0 & \ldots & 0 \\
			\vdots &   \vdots & \ddots & \vdots\\
			0 & 0 & \ldots & 0 
		\end{pmatrix} g^{(i)}(u,v) = W g^{(i)}(u,v).\]
		This can be achieved by the following FFN:
		\[x \mapsto W\relu(x + u_i)  - Wu_i,\]
		where $u_i\in\mathbb{R}^d$ is an arbitrary vector such that $u_i \ge g^{(i)}(u,v)$ coordinate-wise for every ordered edge $(u,v)$. Indeed, then we obtain:
		\[ x\mapsto W\relu(x + u_i)  - Wu_i = W(x + u_i) - Wu_i =  Wx\]
		for any $x$ of the form $x = g^{(i)}(u,v)$, as required.
		
		Now, assume that we are  at an iteration when  $G$ and $H$ have different multisets of EB-1WL labels. We now need to do one more iteration that maps distinct feature vectors into linearly independent vectors so that $G$ and $H$ will have different final representations in \eqref{eq_pooling}. We set all matrices and bias vectors in (\ref{eq_left}--\ref{eq_right}) to 0 so that $g^{(i)}(u,v) = f^{(i - 1)}(u,v)$. We then use Lemma \ref{lemma_ind} again to construct a matrix $U$ and a vector $u$ such that the following transformation:
		\begin{equation}
			\label{eq_image}
			f^{(i - 1)}(u,v) \mapsto \relu(U f^{(i - 1)}(u,v) + v)
		\end{equation}
		maps distinct vectors into linearly independent ones.
		There are at most $m$ distinct feature vectors, which means we can use coordinates that are 0 in $g^{(i)}(u,v)$ (by the invariant b)) for the image of \eqref{eq_image} so it does not interfere with the first coordinate of $g^{(i)}(u,v)$ in \eqref{eq_ffn} . By setting $V = Id, v = 0$, we obtain a FFN that realizes this transformation.
		\begin{proof}[Proof of Lemma \ref{lemma_ind}]
			Since $f_1, \ldots, f_n$ are distinct, there exists $w\in\mathbb{R}^d$ such that $\langle f_1, w\rangle, \ldots, \langle f_n, w\rangle$ are distinct (because the set of $w$ such that $\langle f_i, w\rangle = \langle f_j, w\rangle$ for some $i\neq j$ is a union of finitely many hyperplanes, not covering the whole $\mathbb{R}^d$). Without loss of generality, 
			\[\langle f_1, w\rangle <\langle f_2, w\rangle <\ldots <\langle f_n, w\rangle.\]
			For $i = 1, \ldots, n$, let $\gamma_i$ be some number between $\langle f_{i-1}, w\rangle$ and $\langle f_{i}, w\rangle$ (for $i = 1$, this is some number smaller than $\langle f_1, w\rangle$).
			Define
			\[ A = \begin{pmatrix}
				w \\ \vdots \\ w
			\end{pmatrix}, \qquad b = \begin{pmatrix}
				- \gamma_1 \\ -\gamma_2 \\ \vdots \\ -\gamma_n
			\end{pmatrix}.\]
			
			Observe that in the vector
			\[A f_i + b= \begin{pmatrix}
				\langle f_i, w\rangle - \gamma_1 \\
				\langle f_i, w\rangle - \gamma_2 \\
				\vdots \\
				\langle f_i, w\rangle - \gamma_n
			\end{pmatrix}\]
			the first $i$ coordinates are strictly positive, and the other coordinates are strictly negative. Hence, the vector $g_i =\operatorname{ReLU}(A f_i + b)$ is a vector where the first $i$ coordinates are strictly positive, and the rest are 0s. Therefore, $g_1, \ldots, g_n$ are linearly independent. 
		\end{proof}
		
		This finishes the proof of the theorem. 
	\end{proof}
	\paragraph{Algorithmic implementation.} 
	In our implementation we perform a preprocessing step in which we enumerate all triangles in time $O(\alpha m)$ using the algorithm of ~\citet{chiba1985arboricity}. 
	Since each triangle $(u,v,y)$ corresponds to an edge $(u,v)$ and a node $y\in N(u)\cap N(v)$,
	we can obtain the sets $N(u)\cap N(v)$ required in Eq.~\eqref{eq_center} for all
	edges $(u,v)$ in time $O(\alpha m)$.  After that, each iteration of EB-GNN takes
	time $O(m + t)$, where $t$ is the number of triangles---implying that in
	practice we might obtain better running times per iteration than in Proposition~\eqref{prop_time} because $t = O(\alpha m)$. However, due to this preprocessing step we require memory $O(m + t)$ (instead of just $O(m)$).

	\section{Experimental Evaluation}
	\label{sec:exp}
	\begin{table}[t]
		\caption{Empirical results on expressivity datasets. GIN + C$_3$ is an MPNN that uses triangle subgraph counts as additional node features. MPNN and 2WL results on \texttt{BREC} are from \citet{BREC}. For details on runtime constants see Table~\ref{tab:qm9}.}
		\label{tab:expressivity}
		
		\resizebox{\columnwidth}{!}{
			\begin{tabular}{lcccccc}
				\toprule
				& Runtime & \tablecell{\texttt{CSL}\\{\tiny Accuracy ($\uparrow$)}} & \multicolumn{4}{c}{\tablecell{\texttt{BREC}\\ {\tiny\# Distinguishable Graph Pairs ($\uparrow$)}}} \\
				\cmidrule(lr){3-6}
				Model & & & \texttt{Basic} & \texttt{Reg.} & \texttt{Ext.} & \texttt{CFI} \\
				\midrule
				MPNN       & $\mathcal{O}\left(m\right)$ & $10\%$ & 0 & 0 & 0 & 0 \\
				MPNN + C$_3$ & $\mathcal{O}\left(\alpha m \right)$ & $20\%$ & 0 & 0 & 0 & 0 \\ \hline
				$2$WL       & $\mathcal{O}\left(n^3\right)$ & -- & 60 & 50 & 100 & 60 \\
				$I^2$-GNN & $\mathcal{O}\left(nsd^2\right)$ & -- & 60 & 100 & 100 & 21 \\
				DRFWL & $\mathcal{O}\left(nd^4\right)$ & -- & 60 & 50 & 99 & 0 \\
				4-$\ell$-GIN & $\mathcal{O}\left(nd^3\right)$ & $60\%$ & 60 & 100 & 95 & 2 \\ \hline
				NC-GNN      & $\mathcal{O}\left(\alpha m \right)$ & $20\%$ & 52 & 48 & 0 & 0 \\
				EB-GNN (ours)     & $\mathcal{O}\left(\alpha m \right)$ & $20\%$ & 59 & 48 & 60 & 0 \\
				\bottomrule
		\end{tabular}}
	\end{table}

	Now we empirically evaluate EB-GNN.  The primary goal of our experiments is to show that EB-GNN provides a fast and expressive general-purpose GNN architecture. Therefore, we evaluate EB-GNN across tasks from diverse domains. We compare EB-GNN against Message Passing Neural Networks (MPNNs), another widely used general-purpose architecture, as well as against state-of-the-art models specifically optimized for each corresponding task. Our implementation is available at \url{https://anonymous.4open.science/r/EdgeBasedGNNs-B526}.

	We focus on graph-level predictions and predictions on existing edges. While graph-level prediction tasks are a well-established use case for predictive GNNs \citep{MorrisAAAI19, WL_Loopy, HyMN}, prediction tasks on existing edges have received less attention. This latter task is relevant in chemistry: rather than relying on costly molecular simulations, GNNs can directly predict quantum mechanical properties, e.g., the bond length between atoms \citep{QMD}.
	
	\subsection*{Experiment setup.} Next, we describe the datasets and baseline models used for comparison. We evaluate EB-GNN on two synthetic datasets designed to measure its practically realized expressivity. Additionally, we assess performance on three real-world datasets: two composed of small molecular graphs and one consisting of large cybersecurity graphs. Consistent with previous observations that real-world graphs have low arboricity, we find that these datasets exhibit small arboricity: $\leq 3$ for molecular graphs and $\leq 15$ for large cybersecurity graphs with each dataset having $<4$ mean arboricity. For each dataset, we compare EB-GNN against both general-purpose models and state-of-the-art task-specific baselines. Details on model and experiment setup are in App.~\ref{sec:appendix_experiments}
	
	\paragraph{Synthetic datasets for measuring expressivity.} We evaluate the empirical expressivity of EB-GNN on two synthetic datasets: \texttt{CSL} \citep{relational_pooling, Benchmarking-GNNs} and \texttt{BREC} \citep{BREC}. The \texttt{CSL} dataset consists of 150 graphs with 41 nodes each, grouped into 10 distinct isomorphism classes. These classes, defined by skip connections between Hamiltonian cycles, are all indistinguishable by $1$WL. The graph-level task is to classify each graph according to its isomorphism class. The \texttt{BREC} dataset comprises pairs of graphs that are indistinguishable by $1$WL but distinguishable by $3$WL. For each pair, the graph-level task is to compute embeddings that correctly differentiate the two graphs.
	
	\begin{table}[t]
		\centering
		\caption{Results on \texttt{MalNet-Tiny}. Top three models as  \first{$1^{\text{st}}$}, \second{$2^{\text{nd}}$}, \third{$3^{\text{rd}}$}. Baselines from \citet{HyMN}.}
		\label{tab:malnet}
		\begin{tabular}{lc}
			\toprule
			Method & MalNet-Tiny {\tiny Accuracy ($\uparrow$) } \\
			\midrule
			MPNN   & 91.10 {\tiny $\pm 0.98$} \\
			HyMN & \second{92.84} {\tiny $\pm 0.52$} \\ 
			GPS (Perf.)   & 92.14 {\tiny $\pm 0.24$} \\ 
			GPS (BigBird)  & 91.02 {\tiny $\pm 0.48$} \\ 
			GPS (Transf.) & 90.85 {\tiny $\pm 0.68$} \\     
			NC-GNN &  \third{$92.50$} {\tiny $\pm 0.56$} \\
			\midrule 
			EB-GNN & \first{$93.30$} {\tiny $\pm 0.66$} \\
			\bottomrule
			\vspace{-0.5cm}
		\end{tabular}
	\end{table}
	
	\begin{table*}[t]
		\centering
		\caption{MAE ($\downarrow$) on \texttt{QMD}. Best model per task marked \first{blue}. D-MPNN results from \citet{QMD}.}
		\label{tab:qmd}
		\setlength\tabcolsep{3pt}
		\resizebox{\textwidth}{!}{
			\begin{NiceTabular}{lllllllll}[create-large-nodes]
				\toprule
				& \multicolumn{4}{c}{\textbf{Edge-level Tasks} (MAE $\downarrow$)} & \multicolumn{4}{c}{\textbf{Graph-level Tasks} (MAE $\downarrow$)} \\
				\cmidrule(lr){2-5} \cmidrule(lr){6-9}
				Model & \tablecell{Bond\\Index\\ (unitless)} & \tablecell{Bond\\Length\\ (\AA)} & \tablecell{Bonding\\Electrons\\(e)} & \tablecell{Natural\\Ionicity\\(unitless)} & \tablecell{IP} & \tablecell{EA} & \tablecell{Dipole\\Moment\\(debye)} & 
				\tablecell{Traceless\\Quad. Mom.\\(debye \AA)} 
				\\
				& $\times 10^{-3}$ & $\times 10^{-3}$ & $\times 10^{-2}$ & $\times 10^{-4}$ & $\times 10^{-3}$ & $\times 10^{-3}$ & $\times 10^{-1}$ & $\times 10^{0}$ \\
				\midrule
				D-MPNN & $6.65$  & $4.48$  & $1.46$  & $9.00$ & \first{$4.29$} & \first{$4.06$} & $4.59$ & $1.62$ 
				\\
				NC-GNN & 4.82 {\tiny $\pm 0.04$} & 3.33 {\tiny $\pm 0.03$} & 1.28 {\tiny $\pm 0.01$} & 5.39 {\tiny $\pm 0.25$} & $5.3$ {\tiny $\pm 0.03$} & $4.35$ {\tiny $\pm 0.03$} & $4.45$ {\tiny $\pm 0.02$} &$ 1.57$ {\tiny $\pm 0.01$} \\
				EB-GNN
				& \first{$4.42$} {\tiny $\pm  0.01$} & \first{$3.17$} {\tiny $\pm  0.011$} &  \first{$1.19$} {\tiny $\pm  0.018$} & \first{$4.64$} {\tiny $\pm  0.01$} & $5.49$ {\tiny $\pm 0.26$} & $4.46$ {\tiny $\pm  0.12$} & \first{$4.33$} {\tiny $\pm 0.05$} & \first{$1.55$} {\tiny $\pm 0.01$} \\
				\bottomrule
				\CodeAfter
				\tikz \draw[dashed, line width=0.6pt] (1-|6) -- (last-|6);
		\end{NiceTabular}}
	\end{table*}
	
	On the synthetic datasets, we compare EB-GNN against $2$WL to assess how much of $2$WL’s expressivity EB-GNN can replicate while maintaining asymptotically faster runtimes. In addition, we include a comparison with the MPNN GIN \citep{xu18} augmented with triangle subgraph counts as node features \citep{gsn}, referred to as MPNN + C$_3$.
	This comparison helps isolate how much of EB-GNN’s expressivity gain stems from its ability to count triangles, as captured by the $\beta$ aggregation  in Eq.~\eqref{eq_center}. We also compare against NC-GNN to determine whether our theoretical increase in expressivity (Thm.~\ref{theo:versus}) can also be measured empirically. Furthermore, we compare against all models used as baselines in other experiments for which we could find publicly available results.
	
	\medskip
	
	\paragraph{Molecular edge-level and graph-level tasks.} 
	We further evaluate EB-GNN on a range of molecular edge-level and graph-level prediction tasks.
	\citet{QMD} introduce the \texttt{QMD} dataset, which contains 65\,000 molecular graphs annotated with various quantum mechanical properties.  
	We assess EB-GNN on all four edge-level regression tasks from \texttt{QMD}.  
	We evaluate four diverse graph-level regression tasks from \texttt{QMD}, selected to represent varied objectives (most other graph-level tasks in \texttt{QMD} focus on predicting HOMO/LUMO gaps).
	We also evaluate EB-GNN on 12 graph-level regression tasks from the widely used \texttt{QM9} dataset \citep{MoleculeNet}, following common practice \citep{MorrisAAAI19, WL_Loopy}.  
	\texttt{QM9} comprises 130\,000 molecular graphs representing molecules of up to 9 atoms.  
	On average, graphs in both \texttt{QMD} and \texttt{QM9} contain fewer than 20 nodes.

	For \texttt{QMD} tasks, we compare EB-GNN against D-MPNN \citep{doi:10.1021/acs.jcim.9b00237, 10.5555/3045390.3045675}, a directed MPNN that has demonstrated strong performance on chemical prediction tasks \citep{doi:10.1021/jacs.2c01768, doi:10.1021/acs.jcim.1c00975} and is integrated into the widely adopted ChemProp framework \citep{ChemProp}.  
	Similar to EB-GNN, D-MPNN performs message passing on edges rather than nodes. Furthermore, we also compare against NC-GNN \citep{DBLP:journals/tmlr/0015YJ24} which we train using the same hyperparameter tuning procedure as EB-GNN.
	For \texttt{QM9}, we compare against a standard MPNN and several expressive GNN architectures: 1-2-3 GNN \citep{MorrisAAAI19}, DTNN \citep{DTNN, MoleculeNet}, NestedGNN \citep{zhang2021nestedgraphneuralnetworks}, I2-GNN \citep{huang2023boostingcyclecountingpower}, DRFWL \citep{zhou2023distancerestricted}, and the recent state-of-the-art 5-$\ell$GIN baseline \citep{WL_Loopy}.

	\paragraph{Cybersecurity.} To evaluate our model on a different domain with larger graphs, we conduct experiments on \texttt{MalNet-Tiny} \citep{MalNet}. 
	\texttt{MalNet-Tiny} consists of 5\,000 graphs with an average of over 1\,500 nodes, sampled from the \texttt{MalNet} dataset. 
	The graph-level task is to classify whether a function call is benign or belongs to one of four malicious classes (AdWare, Trojan, Addisplay, Downloader).  
	For comparison, we include a standard MPNN and the recently proposed subgraph GNN HyMN \citep{HyMN}. 
	We further compare against the graph transformer GPS \citep{GPS} using different attention mechanisms: Performer \citep{choromanski2022rethinkingattentionperformers}, Big Bird \citep{zaheer2021bigbirdtransformerslonger}, and the standard Transformer \citep{vaswani2023attentionneed}. Finally, we also compare against NC-GNN \citep{DBLP:journals/tmlr/0015YJ24} which we train in the same fashion as EB-GNN.
	
	\subsection*{Results.}  
	Recall that comparisons are made against both general-purpose MPNNs and state-of-the-art models for each dataset. 
	Our experiments demonstrate that EB-GNN consistently outperforms standard MPNNs and remains competitive with state-of-the-art approaches.

	\paragraph{Results on synthetic data.} Table~\ref{tab:expressivity} presents the results of our expressivity experiments on the synthetic datasets. 
	On \texttt{CSL}, EB-GNN achieves an accuracy of 20\%, outperforming a vanilla MPNN (10\% accuracy) and matching an MPNN augmented with triangle counts (MPNN + C$_3$, 20\% accuracy).  
	On \texttt{BREC}, the gains of EB-GNN cannot be attributed to triangle counting, as evidenced by the MPNN + C$_3$ results.  
	Remarkably, EB-GNN achieves performance on \texttt{Basic} and \texttt{Regular} graphs that is nearly identical to $2$WL, though it fails to distinguish any \texttt{CFI} pairs.  
	These results indicate that EB-GNN’s empirically realized expressivity extends well beyond triangle counting and, in many cases, approaches $2$WL expressivity while maintaining significantly faster runtimes.We outperform NC-GNN in two categories and tie it in the remaining three. Notably, on \texttt{Extension} graphs EB-GNN solves 60 instances while NC-GNN solves 0, highlighting our empirical increase in expressivity.
	
	\begin{table*}[t]
		\centering
		\caption{Normalized test MAE ($\downarrow$) on QM9 dataset. Top three models as  \first{$1^{\text{st}}$}, \second{$2^{\text{nd}}$}, \third{$3^{\text{rd}}$}. Table based on \citet{WL_Loopy}. For runtime, $n$ is the number of nodes, $m$ the number of edges, $c$ and $s$ are maximum size of subgraph sizes, $d$ the maximum degree, and $\alpha$ the arboricity.}
		\label{tab:qm9}
		{\footnotesize
			\setlength\tabcolsep{3pt}
			\resizebox{\textwidth}{!}{
				\begin{tabular}{lllllllll}
					\toprule
					Target (MAE $\downarrow$) & \multicolumn{7}{c}{Model}\\
					\cmidrule{2-9}
					& MPNN & 1-2-3 GNN & DTNN & NestedGNN & I2-GNN & DRFWL & $5$-$\ell$GIN{} & EB-GNN \\
					Runtime & $\mathcal{O}\left(m\right)$ & $\mathcal{O}\left(n^3\right)$ & $\mathcal{O}(m)$ & $\mathcal{O}\left(ncd\right)$ & $\mathcal{O}\left(nsd^2\right)$ & $\mathcal{O}\left(nd^4\right)$  & $\mathcal{O}\left(nd^5\right)$ & $\mathcal{O}\left(\alpha m \right)$ \\
					\midrule
					$\mu$ {\tiny$\left(\times 10^{-1}\right)$} & 4.93 & 4.76 & \first{2.44} & 4.28 & 4.28 & \third{3.46} & 3.50 {\tiny$\pm 0.11$}  & \second{3.15} {\tiny $\pm 0.02$}
					\\
					$\alpha$ {\tiny$\left(\times 10^{-1}\right)$} & 7.8 & 2.7 & 9.5 & 2.90 & 2.30 & \third{2.22} & \second{2.17} {\tiny$\pm 0.25$} & \first{2.08} {\tiny $\pm0.03$}
					\\
					$\varepsilon_{\mathrm{homo}}$ {\tiny$\left(\times 10^{-3}\right)$} & 3.21 & 3.37 & 3.88 & 2.65 & 2.61 & \third{2.26} & \first{2.05} {\tiny$\pm 0.05$}& \second{2.18} {\tiny $\pm 0.01$}
					\\
					$\varepsilon_{\mathrm{lumo}}$  {\tiny$\left(\times 10^{-3}\right)$} & 3.55 & 3.51 & 5.12 & 2.97 & 2.67 & \third{2.25} & \first{2.16}  {\tiny$\pm 0.04$} & \second{2.17} {\tiny $\pm0.02$}
					\\
					$\Delta(\varepsilon)$  {\tiny$\left(\times 10^{-3}\right)$} & 4.9 & 4.8 & 11.2 & 3.8 & 3.8 & \third{3.24} & \second{3.21} {\tiny$\pm 0.14$} &  \first{3.02}  {\tiny $\pm 0.01$}
					\\
					$R^2$ & 34.1 & 22.9 & 17.0 & 20.5 & 18.64 & \third{15.04} & \first{13.21} {\tiny$\pm 0.19$} & \second{13.83} {\tiny $\pm0.12$}
					\\
					$\mathrm{ZVPE}$ {\tiny$\left(\times 10^{-4}\right)$} & 12.4 & 1.9 & 17.2 & 2. & 1.4 & \third{1.7} & \second{1.27} {\tiny$\pm 0.03$} & \first{1.26} {\tiny $\pm0.01$}
					\\
					$U_0$ & 2.32 & 0.0427 & 2.43 & 0.295 & 0.211 & \third{0.156} & \first{0.0418} {\tiny$\pm 0.0520$} & \second{0.063} {\tiny $\pm0.002$}
					\\
					$U$ & 2.08 & \third{0.111} & 2.43 & 0.361 & 0.206 & 0.153 & \first{0.023} {\tiny$\pm 0.023$} &  \second{0.078} {\tiny$\pm 0.008$}
					\\
					$H$ & 2.23 & \third{0.0419} & 2.43 & 0.305 & 0.269 & 0.145 & \first{0.0352}  {\tiny$\pm 0.0304$} & \second{0.0564} {\tiny$\pm 0.0075$}
					\\
					$G$ & 1.94 & \second{0.0469} & 2.43 & 0.489 & 0.261 & 0.156 & \first{0.0118} {\tiny$\pm 0.0015$} & \third{0.076} {\tiny$\pm 0.006$}
					\\
					$C_v$ & 0.27 & 0.0944 & 2.43 & 0.174 & \second{0.0730} & \third{0.0901} & \first{0.0702} {\tiny$\pm 0.0024$} & 0.092 {\tiny$\pm 0.001$}
					\\
					\bottomrule
		\end{tabular}}}
	\end{table*}
	
	\paragraph{Molecular edge-level and graph-level tasks.} 
	Table~\ref{tab:qmd} presents the edge-level and graph-level results on \texttt{QMD}, compared against the baseline D-MPNN \citep{QMD}.  
	EB-GNN outperforms D-MPNN across all edge-level tasks, reducing the mean absolute error by 20\% to 50\% depending on the task. On graph-level tasks, EB-GNN and D-MPNN perform comparably, with EB-GNN slightly outperforming D-MPNN on two tasks and slightly underperforming on the other two. EB-GNN outperforms NC-GNN in six out of eight datasets and only loses to NC-GNN for datasets where D-MPNN is the best performing model. 
	Table~\ref{tab:qm9} summarizes the results on \texttt{QM9}, where we compare against several expressive GNNs. 
	The current state-of-the-art on this dataset is 5-$\ell$GNN \citep{WL_Loopy}, which significantly improved over previous architectures. 
	EB-GNN ranks as the best or second-best model on 11 out of 12 tasks, achieving comparable or better performance than 5-$\ell$GNN on 7 tasks (with at most 6\% lower accuracy) and outperforming it on 4 tasks.  
	Moreover, EB-GNN is substantially more computationally efficient than 5-$\ell$GNN: while 5-$\ell$GNN has asymptotic runtime $\mathcal{O}\left(n d^5\right)$, EB-GNN only requires $\mathcal{O}\left(\alpha m\right)$ time.
	In our experiments, EB-GNN is approximately 4 times faster (see App.~\ref{sec:appendix_experiments}).
	
	\paragraph{Results on graph-level tasks for malware detection.} 
	Table~\ref{tab:malnet} shows the results on \texttt{MalNet-Tiny}. EB-GNN outperforms all other models, including a standard MPNN, various graph transformers (GPS), and the subgraph GNN HyMN, 
	demonstrating that EB-GNN generalizes well to other domains and scales effectively to larger graphs.
	
	\section{Conclusion, Limitations and Future work}
	\label{sec:conclusion}
	
	We propose an edge-based message passing algorithm that combines high expressivity with near-linear runtime on sparse graphs. Our analysis fully characterizes its expressivity in terms of first-order logic and establishes a lower bound via homomorphism counting. Empirically, our architecture outperforms standard MPNNs while remaining competitive with more expressive models, at a substantially lower runtime. We observe that EB-GNN is a general-purpose architecture: while it achieves strong empirical results, it does not rely on specialized techniques used on top of basic architectures known to improve GNN performance, such as using subgraph counts \citep{gsn}, homomorphism counts \citep{DBLP:conf/nips/BarceloGRR21, pmlr-v202-welke23a,DBLP:conf/icml/JinBCL24} or positional encodings \citep{you2019positionawaregraphneuralnetworks,ying2021transformersreallyperformbad, pmlr-v202-ma23c,bao2025homomorphismcountsstructuralencodings}.
	
	\emph{Limitations}. Although EB-1WL and EB-GNN achieve near-linear running times for sparse graphs, their efficiency depends on arboricity and triangle enumeration, which can be expensive on very dense graphs, potentially limiting scalability in such settings. Furthermore, EB-GNN produces embeddings only for edges preventing direct application to node-level tasks and standard link-prediction methods that rely on node embeddings. We leave extending EB-GNN to these tasks as future work.

	\emph{Future work}. An interesting open problem left by our work is whether the converse of Thm.~\ref{theo:counts} holds, i.e., whether graphs distinguishable by EB-1WL are exactly those distinguishable by homomorphism counts from some chordal graph of treewidth 2. 
	Another line of future work concerns the tradeoff between expressiveness and generalization: recent results show that greater expressiveness need not harm generalization if matched to task demands and training data \citep{DBLP:journals/corr/abs-2505-11298}, and can even help when graphs are well separated by large margins \citep{DBLP:conf/iclr/LiG0025}. EB-GNNs strike a principled balance --- more expressive than NC-1WL, yet far cheaper than 2WL --- while showing strong results across benchmarks. Future work should broaden empirical evaluation to fully assess this balance of expressiveness, scalability, and generalization.
	
	\paragraph{Acknowledgments} Barcel\'o, Kozachinskiy, and Rojas are funded
	by the National Center for Artificial Intelligence CENIA
	FB210017, Basal ANID. Barcel\'o is also funded by ANID
	Millennium Science Initiative Program Code ICN17002. Kozachinskiy is also  supported by ANID Fondecyt Iniciaci\'on
	grant 11250060.
	Matthias Lanzinger acknowledges support by the Vienna Science and Technology Fund (WWTF) [10.47379/ICT2201].
	Neumann is funded by the Vienna Science and Technology Fund (WWTF) [Grant ID: 10.47379/VRG23013].

\appendix

\section{Relationship with the Chiba--Nishizeki algorithm}
\label{sec:relationship}

	We provide a thorough explanation of the algorithm by \citet{chiba1985arboricity}, state some of its properties which are important for our architecture and describe how the algorithm inspired our GNN architecture.
	
	\paragraph{Description of the Chiba--Nishizeki algorithm.}
	The algorithm by \citet{chiba1985arboricity} obtains as input an undirected, unweighted graph $G=(V,E)$ and it returns a list of all triangles in $G$.
	Here, we provide a slightly modified version of the algorithm, which is slightly easier to describe but has the same properties.
	
	Concretely, the algorithm works as follows. It iterates over all edges $(u,v)\in E$ and for each edge $(u,v)$ does the following: Suppose w.l.o.g.\ that $u$ is the lower-degree endpoint of the endpoint of the edge $(u,v)$, i.e., assume that $|N(u)| \leq |N(v)|$. Now the algorithm iterates over all neighbors $w\in N(u)$ and checks if $w\in N(v)$; if this is the case then it returns that $(u,v,w)$ is a triangle.
	
	Note that the algorithm returns all triangles: Clearly, for any triangle $(u,v,w)$ its edge $(u,v)$ will be considered in the outer loop and we will have that $w\in N(u)$ (thus $w$ is considered in the inner loop) and $w\in N(v)$ (satisfying the if-condition). Thus, the triangle $(u,v,w)$ will be reported.
	
	Further, the algorithm's running time is $O(\alpha m)$: Note that for each edge $(u,v)$ we spend time \[O(\min\{|N(u)|, |N(v)|\})\] since we only spend time proportional to the neighborhood size of the lower-degree endpoint of $(u,v)$. Here, we use that the check whether $w\in N(v)$ can be done in time $O(1)$ using hash maps.\footnote{This is where our version of the algorithm differs from the original. The paper by \citet{chiba1985arboricity} does not use hash maps and instead uses a slightly more complicated marking procedure.} 
	Thus, its total running time is given by $\sum_{(u,v)\in E} O(\min\{|N(u)|, |N(v)|\})$ 
	and \citet[Lemma~2]{chiba1985arboricity} showed that this quantity is bounded by 
	$O(\alpha m)$.
	
	\paragraph{Properties of the Chiba--Nishizeki algorithm.}
	The algorithm has several important properties:
	\begin{enumerate}
		\item The algorithm lists all triangles in time $O(\alpha m)$. This implies that graphs with arboricity~$\alpha$ contain at most $O(\alpha m)$~triangles. This allows us to bound the number of messages we need to send in our architecture due to triangles by $O(\alpha m)$.
		\item The running time of the Chiba--Nishizeki algorithm is optimal under standard assumptions from the complexity theory community~\citep{kopelowitz2016higher,williams2020monochromatic}. Thus, the preprocessing time of our algorithm cannot be improved if all triangles need to be enumerated.
		\item It is well-known in the algorithms community that the class of graphs with arboricity $O(1)$ contains natural graph families, such as planar graphs, minor-closed families, and preferential attachment graphs, among others. Thus, for all graphs from these families, all triangles can be enumerated in time~$O(n)$ and thus in time linear in the size of the input graph.
		See, e.g., \citet[Lemma~1]{chiba1985arboricity} for the planar case.
		\item It holds that $\alpha = O(\sqrt{m})$~\citet[Lemma~1]{chiba1985arboricity}. As a consequence, the arboricity can be at most a $O(\sqrt{n})$ factor larger than the average degree in the graph. Indeed, this is the case for the simple example consisting of a path with $n-\sqrt{n}$~vertices, which is connected to a clique with $\sqrt{n}$~vertices; this graph has $m=O(n)$ edges and thus average degree $O(1)$ but its arboricity is $O(\sqrt{n})$ due to the clique.
		\item The arboricity is highly related to other graph parameters, such as the \emph{degeneracy} or the \emph{densest subgraph}~\citep{nash1961edge} and differs from them by at most a factor of~2.
		\item In practice, it is well-known that practical networks have very small arboricities~\citep{eppstein2010listing}. Indeed, in all 39 datasets considered by \cite{eppstein2010listing}, the arboricity is at most 201 even though their largest graph has 3.7 million nodes and 16.5 million edges. On 32/39 of their datasets, the arboricity is less than 60.
		We note that \cite{eppstein2010listing} report the degeneracy, which is an upper bound on the arboricity.
	\end{enumerate}
	
	\paragraph{Relationship to our architecture.}
	Next, we briefly describe how the Chiba--Nishizeki algorithm inspired our architecture. Recall that the algorithm iterates over all $(u,v)$; then it iterates over all $w\in N(u)$ and checks whether $w\in N(v)$ to see if it should report a triangle. In other words, it only reports triangles for vertices~$w$ such that $w\in N(u) \cap N(v)$.
	
	When looking at this procedure from the perspective of distributed computing, one can view this as follows: Each edge~$(u,v)$ aggregates the neighborhoods $N(u)$ and $N(v)$ and then computes their intersection $N(u) \cap N(v)$ to obtain in which triangles it appears in.
	
	Indeed, this distributed point of view is the motivation for our Equations~\eqref{eq_left}, \eqref{eq_center} and \eqref{eq_right}: 
	Equation~\eqref{eq_left} aggregates the embeddings of neighbors $N(u)$ of $u$,
	Equation~\eqref{eq_right} aggregates the embeddings of neighbors $N(v)$ of $v$, and 
	Equation~\eqref{eq_center} aggregates the embeddings of all edges $(u,w)$ and $(v,w)$ such that $w\in N(u)\cap N(v)$ forms a triangle with $u$ and $v$.

	\section{More Details on Experiments}
	\label{sec:appendix_experiments}
	
	We provide additional information on our experimental procedure and more detailed results. 
	
	\paragraph{Model.} We have defined graph as purely existing of nodes and edges $G = (V, E)$ . However, real-world datasets often use node features and edge features to encode additional information in the graph for all. For every directed  edge $(u , v) \in E$, we incorporate the node features of $X_u, X_v$ of $u,v$ and the edge features $W_{(u,v)}$ of $(u,v)$ into our model by initializing the edge embedding ${\cal T}^{(0)}(G,(u,v))$ with them. We use three different embedding encoders $\text{ENC}$ to map both the node features and the edge features to the embedding dimension of our GNN
	
	$${\cal T}^{(0)}(G,(u,v)) = \text{ENC}_{\text{left}}\left(X_u\right) + \text{ENC}_{\text{right}}\left(X_v\right) + \text{ENC}_{\text{edge}}\left(W_{(u,v)}\right).$$
	
	After obtaining the initial edge embedding, we perform multiple iterations of edge based message passing. After final iteration $t$, we pool edge embeddings into the shape required by the task. For graph-level tasks, we experiment with three methods. We compute graph-level embeddings by either summing 
	\[{\cal T}_{\text{SUM}}(G) = \sum\limits_{(u,v) : \{u,v\}\in E} {\cal T}^t(G,(u,v)),\]
	computing the mean, or computing the mean scaled by the number of nodes (mimicking sum pooling in MPNNs) 
	\[
	\begin{aligned}
		{\cal T}_{\text{MEAN}}(G)     &= \tfrac{1}{|E|} {\cal T}_{\text{SUM}}(G), 
		&\qquad 
		{\cal T}_{\text{NODESUM}}(G) &= \tfrac{|V|}{|E|} {\cal T}_{\text{SUM}}(G).
	\end{aligned}
	\]
	
	For edge-level tasks, note that we compute embeddings of directed edges whereas tasks we worked on were on \emph{undirected} regression edges. We experiment with two methods of performing undirected edge predictions. First, we simply sum the embedding of the two directed edges and perform a prediction on this combined embedding. Second, we make a prediction for both directions and combine these predictions by computing the mean.
	
	All our models were implemented in PyTorch Geometric \citep{PyTorch, PyTorch-Geometric}. All our models were trained on servers with one NVIDIA GeForce RTX 3080 GPU (10 GB VRAM) and 64 GB of RAM. When training a model we evaluate its performance after every epoch on the validation and test set. This performance is measured in the metric that is most commonly used on that dataset. After training, we report the validation and test performance in the epoch with the best validation performance.
	For real-life datasets, we perform hyperparameter tuning where we pick the hyperparameter combination based on the best validation performance. We train a model with this hyperparameter combination multiple times on different seeds reporting the mean and standard deviation of the test metric. 
	
	In general, all our models are trained with a Cosine learning rate scheduler and a learning rate of $0.001$ (except on \texttt{BREC} where we use the procedure provided by \citet{BREC}). The final prediction is made by a two-layer MLP. On all real-life datasets EB-GNN uses both skip connections and feed-forward layers, but not on the synthetic datasets \texttt{CSL} and \texttt{BREC}. Below, we discuss the different setup we used for each dataset. For more details, please consider our code at \url{https://anonymous.4open.science/r/EdgeBasedGNNs-B526/}. 
	
	\paragraph{\texttt{CSL}.} We train all models for 1000 epochs with a Cosine learning rate scheduler. All GNNs have 5 layers and an embedding dimension of 64.
	
	\paragraph{\texttt{BREC}.} We train and evaluate our model with the procedure provided by \citet{BREC}. Our EB-GNN model has 10 layers, an embedding dimension of 16 and uses some pooling.  We have observed that using the output of  Eq.~\ref{eq_overall} instead of Eq.~\ref{eq_ffn} as edge embeddings leads to better results on the extension graphs. We believe that this is due to numerical issues caused by the large number of layers and small embedding dimension (which is necessary to fit the model into GPU memory). Our MPNN models also have 10 layers but use an embedding dimension of 64.
	
	\paragraph{\texttt{QMD}.} We train separate models for each of the 8 different tasks. We tune the hyperparameters of EB-GNN for each task based on the grid in Table~\ref{tab:hyp_QMD_grid}. We train for 500 epochs with a batch size of 1024 and evaluate on 10 different seeds.
	
		\begin{table}[ht]
		\centering
		\caption{Hyperparameter grid used on \texttt{QMD} and \texttt{QM9}.}
		\label{tab:hyp_QMD_grid}
		\begin{tabular}{ll}
			\toprule
			\textbf{Hyperparameter} & \textbf{Values} \\
			\midrule
			Embedding dimension & 128, 256 \\
			Dropout rate & 0, 0.2, 0.5 \\
			Number of message passing layers & 3, 4, 5 \\
			Pooling (graph-level tasks) & ${\cal T}_{\text{SUM}}$, ${\cal T}_{\text{MEAN}}$ \\
			Pooling (edge-level tasks) & \tablecell{sum directed embeddings $\rightarrow$ make undirected prediction,\\ make directed predictions $\rightarrow$ sum into undirected prediction} \\
			\bottomrule
		\end{tabular}
		
	\end{table}

	\paragraph{\texttt{QM9}.} We train separate models for each of the 11 different tasks. Initially, we repeated the same procedure as for \texttt{QMD}. However,  training with a significantly larger batch size than models in literature might harm our performance. Thus, after the initial hyperparameter sweep (Table~\ref{tab:hyp_QMD_grid}) we used the best hyperparameters and additionally tuned the batch size together with another pooling operation (Tab.~\ref{tab:hyp_QM9_small_grid}). We evaluate the best hyperparameter combination on 10 different seeds.
	
		\begin{table}[ht]
		\centering
		\caption{Smaller hyperparameter grid used on \texttt{QM9} after hyperparameter tuning with Tab.~\ref{tab:hyp_QMD_grid}.}
		\label{tab:hyp_QM9_small_grid}
		\begin{tabular}{ll}
			\toprule
			\textbf{Hyperparameter} & \textbf{Values} \\
			\midrule
			Embedding dimension & Best from Tab.~\ref{tab:hyp_QMD_grid} \\
			Dropout rate & Best from Tab.~\ref{tab:hyp_QMD_grid} \\
			Number of message passing layers & Best from Tab.~\ref{tab:hyp_QMD_grid}\\
			Batch size & 64, 1024 \\
			Pooling & \tablecell{${\cal T}_{\text{NODESUM}}$,\\ Best from Tab.~\ref{tab:hyp_QMD_grid}} \\
			\bottomrule
		\end{tabular}
	\end{table}

	Similar to \citet{zhou2023distancerestricted} and \citet{WL_Loopy}, we perform a speed evaluation on \texttt{QM9}. For this, we train our model on the training set with batch size 64 and measure the time it takes to train for a single epoch. Additionally, we also track the pre-processing time on the entire dataset. The results can be seen in Tab.~\ref{tab:runtime_complexity}. There are two issues with this type of evaluation. First, we (as well as previous work) compare runtime of models trained on different hardware. This is best noticed by the fact that our expressive EB-GNN is faster than the MPNN in Tab.~\ref{tab:runtime_complexity}. In our case, this is less of a problem because compared to 5$\ell$-GIN (the other best model on \texttt{QM9}), our GPU is significantly weaker (RTX 3080 vs RTX 3090 Ti/RTX A6000). Second, we believe that previous works included the initial processing time for the dataset in pre-processing. This includes time spent to generate the initial graphs which is needed for all models but can vary across hardware. We remedy this by reporting both the time spent for our pre-processing (70 seconds) as well as the time spent on preparing the initial graphs (30 seconds). 
	
	\begin{table}[ht]
		\centering
		\caption{Empirical time complexity for QM9 dataset; results from \citet{zhou2023distancerestricted} and \citet{WL_Loopy}.}
		\label{tab:runtime_complexity}
		\begin{tabular}{cccc}
			\toprule
			Model & Preprocessing [sec] & Training [sec/epoch]\\
			\midrule
			MPNN & $64$ & $45.3$ \\
			NestedGNN & $2\,354$ & $107.8$\\ 
			I2GNN & $5\,287$ & $209.9$\\
			2-DRFWL & $430$ & $141.9$\\
			5-$\ell$GIN & $444$ & $130.6$\\
			\midrule
			EB-GNN (ours) & 100 & 31 \\
			\bottomrule
		\end{tabular}
	\end{table}

	\paragraph{\texttt{MalNet-Tiny}.} As the graphs in \texttt{MalNet-Tiny} are very large, we did not perform any hyperparameter tuning. Our model uses mean pooling, has 5 message passing layers and an embedding dimension of 64. It is trained for 500 epochs with a batch size of 16 and evaluated on  5 different seeds. Note that this graph is \emph{directed}. For this we extended EB-GNN as follows. We only compute edge embeddings for directed edges that are part of the graph. Furthermore, for $\alpha$ and $\gamma$ only aggregate edges according to their direction, i.e., $\mathcal{N}$ are directed neighbors. For $\beta$, we adapt the computation of triangles to treat the entire graph as undirected. This can lead to aggregations of an edge $(u,v)$ where only the other direction $(v,u)$ is part of the graph. In this case, we aggregate over $(v,u)$, the direction that exists in the graph instead.

\section{Relationship to $\delta$-$k$-LWL}
\label{app:relationship-delta-k-lwl}

\citet{DBLP:conf/nips/0001RM20} introduced $\delta$-$k$-WL and its local variant $\delta$-$k$-WL as a variant of the $k$-WL test that is more efficient on sparse graphs, matching our motivation for the introduction of EB-1WL. The relationship between the two formalisms, especially for the case $k=2$, is interesting. The formalisms look superficially similar, while being conceptually quite far apart. We therefore elaborate on these differences explicitly here.

In $\delta$-$k$-LWL, colors $\mathsf{c}^{(\ell)}$  are assigned to $k$-tuples of vertices. Initially, $\mathsf{c}(\mathbf{v})$ is colored by the isomorphism type of $G[\mathbf{v}]$ (the induced subgraph of $G$ by vertices in $\mathbf{v}$). We refer to position $i$ of $\mathbf{v}$ as $v_i$. Moreover, let $\theta_i(\mathbf{v},x)$ denote the tuple obtained by replacing the $i$ component of $\mathbf{v}$ with $x$.
The colors are then updated iteratively according to
\begin{align}
	\mathsf{c}^{(\ell+1)}(\mathbf{v}) = \big( &\mathsf{c}^{(\ell)}(\mathbf{v}, \\
	& \{\!\{ \mathsf{c}^{(\ell)}(\theta_1(\mathbf{v}, x)) \mid x \in N(v_1) \}\!\}, \\
	& \cdots \\
	& \{\!\{ \mathsf{c}^{(\ell)}(\theta_k(\mathbf{v}, x)) \mid x \in N(v_k) \}\!\} \ \big)
\end{align}

On the surface the core update part, gathering the multisets over neighbors of
each component, seems similar to parts (2) and (4) in our update function for
$\mathsf{eb}$. However, there are key differences: In $\delta$-$k$-LWL the update considers vectors where the $i$-th component is replaced by neighbors of the $i$-th component, i.e., we replace $v_i$ with elements in $N(v_i)$.
In EB-1WL we, in a sense, do the opposite. We take the edges incident to each $v_i$, that is we form the tuples $(v_i, x)$ for $x\in N(v_i)$. Instead of replacing $v_i$ by its neighbors, we look at the tuples formed by $v_i$ with its neighbors.

We illustrate this with the following simple example graph:
\[
\begin{tikzpicture}[baseline=-0.5ex, node distance=12mm]
	\tikzstyle{v}=[circle,draw,inner sep=1pt,minimum size=6pt]
	\node[v] (A) {}; \node[below=0pt of A] {$a$};
	\node[v] (B) [right of=A] {}; \node[below=0pt of B] {$b$};
	\node[v] (C) [right of=B] {}; \node[below=0pt of C] {$c$};
	\draw (A) -- (B) -- (C);
\end{tikzpicture}
\]

The $\delta$-2-WL update for $\mathsf{c}((a,b))$ would update based on the pair of multisets  
\(\{\!\{C(a,x) \mid x \in N(b)\}\!\}\) and \(\{C(x,c) \mid x \in N(a)\}\), that is both times \(\{\!\{C(a,c)\}\!\}\).  
That is, we replace each component with all its neighbors, i.e., a classic \(k\)-WL style update but limited to local neighborhoods.

The seemingly similar part of our update (that is only (2) and (4), ignoring (3)) would update based on  
\(\{C(a,x) \mid x \in N(a)\}\) (2) and \(\{C(b,x) \mid x \in N(b)\}\).  
So \(\{C(a,b)\}\) and \(\{C(b,c)\}\), i.e., the update is based on the edges incident to edge \((a,b)\) (and the edge itself).

Another noteworthy difference is in the computational complexity of a layer update.
An update for $\delta$-$2$-WL requires   
\(O(n^2 d)\) time, where \(d\) is the degree of the graph. As each $n^2$ pairs
has to access $k\cdot d$ colors of neighbors. Intuitively, this bound is
reasonably tight as neither part --- the number of tuples, or the access of
$k\cdot d$ tuples in the update ---  can be avoided.
In contrast, our bound of \(O(\alpha m)\) promises significantly better
performance, especially in sparse graphs as they have \(m \ll n^2\) and
\(\alpha \ll d\).

	\section{EB-1WL vs 1WL with Triangle Information}
\label{app:wl-triangle}

The important role of triangles in EB-1WL naturally motivates a comparison with MPGNNs that have triangle counts injected. 

Here we provide an example demonstrating that EB-1WL is strictly more expressive than 1WL with triangle counts added on node and edge level. \Cref{fig:stupidex} shows two graphs from the BREC dataset that are indistinguishable by 1WL with such triangle counts but distinguishable by EB-1WL.  \Cref{fig:code} gives example code to verify the indsitinguishability by 1WL. Moreover, the graphs exhibit a different number of homomorphisms from the 4-cycle with a chord and by \Cref{theo:counts} is therefore distinguished by EB-1WL.

\begin{figure}[h]
	\centering
	\begin{subfigure}{0.45\textwidth}
		\centering
		\scalebox{.7}{
			\begin{tikzpicture}[>=stealth, node distance=2cm]
				\node (n0) at (5.00, 0.00) [circle,draw] {0};
				\node (n3) at (4.05, 2.94) [circle,draw] {3};
				\node (n4) at (1.55, 4.76) [circle,draw] {4};
				\node (n7) at (-1.55, 4.76) [circle,draw] {7};
				\node (n8) at (-4.05, 2.94) [circle,draw] {8};
				\node (n9) at (-5.00, -0.00) [circle,draw] {9};
				\node (n1) at (-4.05, -2.94) [circle,draw] {1};
				\node (n5) at (-1.55, -4.76) [circle,draw] {5};
				\node (n6) at (1.55, -4.76) [circle,draw] {6};
				\node (n2) at (4.05, -2.94) [circle,draw] {2};
				\draw (n0) -- (n3);
				\draw (n0) -- (n4);
				\draw (n0) -- (n7);
				\draw (n0) -- (n8);
				\draw (n0) -- (n9);
				\draw (n3) -- (n1);
				\draw (n3) -- (n5);
				\draw (n3) -- (n6);
				\draw (n3) -- (n7);
				\draw (n3) -- (n8);
				\draw (n3) -- (n9);
				\draw (n4) -- (n2);
				\draw (n4) -- (n5);
				\draw (n4) -- (n6);
				\draw (n4) -- (n7);
				\draw (n4) -- (n8);
				\draw (n7) -- (n1);
				\draw (n7) -- (n2);
				\draw (n7) -- (n5);
				\draw (n7) -- (n9);
				\draw (n8) -- (n1);
				\draw (n8) -- (n2);
				\draw (n8) -- (n6);
				\draw (n9) -- (n1);
				\draw (n9) -- (n2);
				\draw (n9) -- (n5);
				\draw (n9) -- (n6);
				\draw (n1) -- (n5);
				\draw (n1) -- (n6);
				\draw (n5) -- (n2);
				\draw (n6) -- (n2);
			\end{tikzpicture}
		}
		\caption{$G_0$}
	\end{subfigure}
	\hfill
	\begin{subfigure}{0.45\textwidth}
		\centering
		\scalebox{.7}{
			\begin{tikzpicture}[>=stealth, node distance=2cm]
				\node (n0) at (5.00, 0.00) [circle,draw] {0};
				\node (n3) at (4.05, 2.94) [circle,draw] {3};
				\node (n4) at (1.55, 4.76) [circle,draw] {4};
				\node (n5) at (-1.55, 4.76) [circle,draw] {5};
				\node (n6) at (-4.05, 2.94) [circle,draw] {6};
				\node (n7) at (-5.00, -0.00) [circle,draw] {7};
				\node (n9) at (-4.05, -2.94) [circle,draw] {9};
				\node (n1) at (-1.55, -4.76) [circle,draw] {1};
				\node (n8) at (1.55, -4.76) [circle,draw] {8};
				\node (n2) at (4.05, -2.94) [circle,draw] {2};
				\draw (n0) -- (n3);
				\draw (n0) -- (n4);
				\draw (n0) -- (n5);
				\draw (n0) -- (n6);
				\draw (n0) -- (n7);
				\draw (n0) -- (n9);
				\draw (n3) -- (n1);
				\draw (n3) -- (n5);
				\draw (n3) -- (n6);
				\draw (n3) -- (n8);
				\draw (n3) -- (n9);
				\draw (n4) -- (n1);
				\draw (n4) -- (n6);
				\draw (n4) -- (n7);
				\draw (n4) -- (n8);
				\draw (n4) -- (n9);
				\draw (n5) -- (n1);
				\draw (n5) -- (n2);
				\draw (n5) -- (n7);
				\draw (n5) -- (n8);
				\draw (n6) -- (n2);
				\draw (n6) -- (n7);
				\draw (n6) -- (n8);
				\draw (n6) -- (n9);
				\draw (n7) -- (n1);
				\draw (n7) -- (n2);
				\draw (n7) -- (n9);
				\draw (n9) -- (n1);
				\draw (n9) -- (n2);
				\draw (n1) -- (n8);
				\draw (n8) -- (n2);
			\end{tikzpicture}
		}
		\caption{$G_1$}
	\end{subfigure}
	
	\caption{The first graph pair in the Extension graphs of the BREC dataset~\cite{BREC}.}
	\label{fig:stupidex}
\end{figure}

\begin{figure}[ht]
	\centering
	\begin{verbatim}
		from BRECDataset import BRECDataset
		import networkx as nx
		
		def edge_idx_to_nx(ei):
		g = nx.Graph()
		for a,b in zip(ei[0], ei[1]):
		g.add_edge(int(a), int(b))
		return g
		
		d = BRECDataset("BREC_data_all")
		
		g0 = edge_idx_to_nx(d[160*32*2 : (160+1)*32*2][0].edge_index)
		g1 = edge_idx_to_nx(d[160*32*2 : (160+1)*32*2][1].edge_index)
		
		
		def edge_tri(g, multiset=False):
		vt = nx.triangles(g)
		et = {m: vt[m[0]]+vt[m[1]] for m in g.edges()}
		if multiset:
		return list(sorted(et.values()))
		else:
		return et
		
		nx.set_node_attributes(g0, nx.triangles(g0), 'c3')
		nx.set_node_attributes(g1, nx.triangles(g1), 'c3')
		
		nx.set_edge_attributes(g0, edge_tri(g0), 'ec3')
		nx.set_edge_attributes(g1, edge_tri(g1), 'ec3')
		
		h1 = nx.weisfeiler_lehman_graph_hash(g0, node_attr='c3', edge_attr='ec3')
		h2 = nx.weisfeiler_lehman_graph_hash(g1, node_attr='c3', edge_attr='ec3')
		
		print(h1, h2, h1 == h2)
	\end{verbatim}
	\caption{Example code to verify the indistinguishability of $G_0$ and $G_1$ by 1WL with triangle counts on edges and vertices.}
	\label{fig:code}
\end{figure}

\end{document}